\tikzstyle{Medium box}=[fill=white, draw=black, shape=rectangle, tikzit shape=rectangle, minimum width=1.5cm, minimum height=1.5cm]
\tikzstyle{medium_box}=[fill=white, draw=black, shape=rectangle, minimum height=0.8cm, minimum width=0.5cm]
\tikzstyle{Large box}=[fill=white, draw=black, shape=rectangle, minimum width=5cm, minimum height=5cm]
\tikzstyle{small}=[fill=white, draw=black, shape=rectangle, minimum width=1cm, minimum height=0.5]
\tikzstyle{loss}=[fill=white, draw=black, shape=rectangle, minimum width=1cm, minimum height=3.5cm]
\tikzstyle{Medium}=[fill=white, draw=green, shape=rectangle, minimum width=1.5cm, minimum height=1.5cm, line width=0.8]
\tikzstyle{blue}=[fill=white, draw=blue, shape=rectangle, minimum width=1.5cm, minimum height=1.5cm]
\tikzstyle{red}=[fill=white, draw=red, shape=rectangle, minimum width=1.5cm, minimum height=1.5cm]
\tikzstyle{Long}=[fill=white, draw=black, shape=rectangle, minimum height=4.5cm, minimum width=1.5cm]
\tikzstyle{blackcircle}=[fill=black, draw=black, shape=circle, minimum width=0.2cm, inner sep=0pt]
\tikzstyle{Small box}=[fill=white, draw=black, shape=rectangle, minimum height=0.5cm, minimum width=0.5cm]
\tikzstyle{elongated}=[fill=white, draw=black, shape=rectangle, minimum height=1cm, minimum width=3cm]
\tikzstyle{circ}=[fill=white, draw=black, shape=circle]
\tikzstyle{blank}=[fill=white, draw=white, shape=circle]
\tikzstyle{none}=[fill=none, draw=none]
\tikzstyle{copy}=[fill=white, draw=black, shape=circle, minimum height=0.2cm, inner sep=0]
\tikzstyle{varCopy}=[fill=black, draw=black, shape=circle, minimum height=0.2cm, inner sep=0]
\tikzstyle{copy2}=[fill=black, draw=black, shape=circle, minimum height=0.2cm, inner sep=0]
\tikzstyle{1morph1}=[fill=white, draw=black, shape=rectangle, minimum width=1cm, minimum height=1cm]
\tikzstyle{1morph}=[fill=white, draw=black, shape=rectangle, minimum width=0.75cm, minimum height=0.75cm, inner sep=0.1cm]
\tikzstyle{2morph2}=[fill=white, draw=black, shape=rectangle, minimum width=1cm, minimum height=2cm]
\tikzstyle{2morph}=[fill=white, draw=black, shape=rectangle, minimum width=1cm, minimum height=1.25cm, inner sep=0.1cm]
\tikzstyle{nmorph}=[fill=white, draw=black, shape=rectangle, minimum height=6cm, minimum width=1cm, inner sep=0.1cm]
\tikzstyle{1state}=[fill=white, draw=black, regular polygon, regular polygon sides=3, minimum height=0.5cm, regular polygon rotate=-30]
\tikzstyle{dbox}=[fill=white, draw=black, dashed, shape=rectangle, minimum width=2cm, minimum height=1cm, inner sep=0.1cm]
\tikzstyle{vdbox}=[fill=white, draw=black, dashed, shape=rectangle, minimum width=2cm, minimum height=1.5cm, inner sep=0.1cm]
\tikzstyle{bigbox}=[fill=white, draw=black, dashed, shape=rectangle, minimum width=2cm, minimum height=4cm, inner sep=0.1cm]
\tikzstyle{2state}=[inner sep=0.05cm, fill=white, draw=black, isosceles triangle, minimum width=1.25cm, isosceles triangle apex angle=90, shape border rotate=180]
\tikzstyle{var2state}=[inner sep=0.05cm, fill=white, draw=black, isosceles triangle, minimum width=1.25cm, isosceles triangle apex angle=60, shape border rotate=180]
\tikzstyle{g2state}=[inner sep=0.05cm, fill=white, draw=black, isosceles triangle, minimum width=6cm, isosceles triangle apex angle=110, shape border rotate=180]
\tikzstyle{bigstate}=[inner sep=0.05cm, fill=white, draw=black, isosceles triangle, minimum width=3cm, isosceles triangle apex angle=110, shape border rotate=180]
\tikzstyle{bigeffect}=[inner sep=0.05cm, fill=white, draw=black, isosceles triangle, minimum width=3cm, isosceles triangle apex angle=110]
\tikzstyle{g2effect}=[inner sep=0.05cm, fill=white, draw=black, isosceles triangle, minimum width=6cm, isosceles triangle apex angle=110]
\tikzstyle{2effect}=[inner sep=0.05cm, fill=white, draw=black, isosceles triangle, minimum width=1.25cm, isosceles triangle apex angle=90]
\tikzstyle{b2effect}=[inner sep=0.05cm, fill=white, draw=black, isosceles triangle, minimum width=2cm, isosceles triangle apex angle=90]
\tikzstyle{node}=[fill=black, draw=black, shape=circle, scale=0.5]
\tikzstyle{GroundLeft}=[fill=white, draw=black, shape=tlground, rotate=-90]
\tikzstyle{GroundRight}=[fill=white, draw=black, shape=tlground, rotate=90]
\tikzstyle{GroundDown}=[fill=white, draw=black, shape=tlground]
\tikzstyle{GroundUp}=[fill=white, draw=black, shape=tlground, rotate=180]
\tikzstyle{Black arrow}=[->]
\tikzstyle{Red line}=[-, draw=red, line width=0.7]
\tikzstyle{Red arrow}=[draw=red, ->]
\tikzstyle{Gray line}=[-, draw={rgb,255: red,191; green,191; blue,191}, line width=0.8]
\tikzstyle{Gray arrow}=[->, draw={rgb,255: red,191; green,191; blue,191}]
\tikzstyle{Blue line}=[-, draw=blue]
\tikzstyle{Blue arrow}=[->, draw=blue]
\tikzstyle{bluearrow}=[->, fill=none, draw={rgb,255: red,29; green,206; blue,255}, thick]
\tikzstyle{midArrow}=[-, decoration={{markings,mark=at position .5 with {\arrow{>}}}}, postaction=decorate]
\tikzstyle{arrow}=[->]
\tikzstyle{pointy}=[->]
\tikzstyle{lightnone}=[-, draw={rgb,255: red,191; green,191; blue,191}]
\tikzstyle{Filled edge shape}=[-, fill=white, draw=black]
\tikzstyle{Thin Grey Arrow}=[->, draw={rgb,255: red,191; green,191; blue,191}, line width=0.3]
\tikzstyle{Thin Grey Line}=[-, draw={rgb,255: red,191; green,191; blue,191}, line width=0.3]
\tikzstyle{Black arrow crossing over}=[->, decoration={{crossing over}}]
\newcommand{\doi}[1]{\textsc{doi}: \href{http://dx.doi.org/#1}{\nolinkurl{#1}}}
\DeclareMathOperator{\Ca}{\mathcal{C}}
\DeclareMathOperator{\Da}{\mathcal{D}}
\DeclareMathOperator{\Nb}{\mathbb{N}}
\DeclareMathOperator{\Rb}{\mathbb{R}}
\DeclareMathOperator{\Zb}{\mathbb{Z}}
\newcommand{\R}{\ensuremath{\mathbb R}}
\newcommand{\Z}{\ensuremath{\mathbb Z}}
\newcommand{\diset}[2]{\left({{#1} \atop {#2}}\right)} 
\newcommand{\pr}[2]{\begin{matrix}{#1} \\ {#2} \end{matrix}} 
\newcommand{\LensHom}[4]{\Bigg(\pr{#1}{#2} \, , \pr{#3}{#4}\Bigg)}
\newcommand{\NamedCat}[1]{\mathbf{#1}}
\newcommand{\Smooth}{\NamedCat{Smooth}}
\newcommand{\POLY}{\mathrm{POLY}}
\newcommand{\PolyZ}{\mathrm{POLY}_{\Z_2}}
\newcommand{\Para}{\NamedCat{Para}}
\newcommand{\Learn}{\NamedCat{Learn}}
\newcommand{\Lens}{\NamedCat{Lens}}
\newcommand{\CLACat}{\NamedCat{CLACat}}
\newcommand{\CoKl}{\NamedCat{CoKl}}
\newcommand{\groundtruth}{b_t}
\newcommand{\RC}{\mathbf{R}_{\Ca}} 
\newcommand{\gda}{\mathsf{gda}}
\newcommand{\GAN}{\mathsf{GAN}}
\newcommand{\Softmax}{\mathrm{Softmax}}
\newcommand{\<}{\langle}
\renewcommand{\>}{\rangle}
\newcommand{\loss}{\mathsf{loss}}
\newcommand{\id}{\text{id}}
\newcommand{\XOR}{\mathsf{XOR}}
\newcommand{\gett}{\ensuremath{\mathsf{get}}}
\newcommand{\putt}{\ensuremath{\mathsf{put}}}
\NewDocumentCommand{\scaletikzfig}{ O{1} O{1} m }{
  \begin{center}
     \scalefont{#2}\scalebox{#1}{\tikzfig{#3}}
   \end{center}
}
\newcommand{\cp}[0]{\ensuremath{\fatsemi}} 
\def\lst@makecaption{%
  \def\@captype{table}%
  \@makecaption
}
\begin{document}

\title{Deep Learning with Parametric Lenses}

\author{Geoffrey S. H. Cruttwell}
\address{Mount Allison University, Canada}
\email{}

\author{Bruno Gavranovi\'c}
\address{University of Strathclyde, United Kingdom
}
\email{}

\author{Neil Ghani}
\address{University of Strathclyde, United Kingdom
}
\email{}

\author{Paul Wilson}
\address{Independent, United Kingdom
}
\email{}

\author{ Fabio Zanasi}
\address{University College London, United Kingdom, and University of Bologna, Italy
}
\email{}

\begin{abstract}
  \label{section:abstract}

We propose a categorical semantics for machine learning algorithms in
terms of lenses, parametric maps, and reverse derivative categories.  This foundation provides a powerful explanatory and unifying framework: it encompasses a variety of gradient
descent algorithms such as ADAM, AdaGrad, and Nesterov momentum,
as well as a variety of loss functions such as MSE and Softmax cross-entropy, and different architectures, shedding new light on their similarities and differences.  Furthermore, our approach to learning has examples generalising beyond the familiar continuous domains (modelled in categories of smooth maps) and can be realised in the discrete setting of Boolean and polynomial circuits.
We demonstrate the practical significance of our framework with an implementation in Python.

\end{abstract}

\keywords{Neural network, Deep Learning, String diagram, Symmetric Monoidal Category, Cartesian Differential Category}

\maketitle

\section{Introduction}
\label{section:introduction}

The last decade has witnessed a surge of interest in machine learning, fuelled by the numerous successes and applications that these methodologies have found in many fields of science and technology. 
As machine learning techniques become increasingly pervasive, algorithms and models become more sophisticated, posing a significant challenge both to the software developers and the users that need to interface, execute and maintain these systems.
In spite of this rapidly evolving picture, the formal analysis of many
learning algorithms mostly takes place at a heuristic level~\cite{SeshiaS16}, or
using definitions that fail to provide a general and scalable framework for
describing machine learning.
Indeed, it is commonly acknowledged through academia, industry, policy makers
and funding agencies that there is a pressing need for a unifying perspective,
which can make this growing body of work more systematic, rigorous, transparent
and accessible both for users and developers \cite{DeepLearningAdHoc,ExplainableAI}.


Consider, for example, one of the most common machine
learning scenarios: supervised learning with a neural network. This technique trains the model towards a certain task, e.g. the recognition of patterns in a data set
(\emph{cf.} Figure~\ref{fig:informalGD}). There are several different ways of
implementing this scenario. Typically, at their core, there is a
\emph{gradient update} algorithm (often called the ``optimiser''), depending on a given \emph{loss function}, which updates in steps the parameters of the network, based on some \textit{learning rate} controlling the ``scaling'' of the update.
All of these components can vary independently in a supervised learning
algorithm and a number of choices is available for loss maps (quadratic error,
Softmax cross entropy, dot product, etc.) and optimisers (Adagrad \cite{Adagrad}, Momentum
\cite{Momentum}, and Adam \cite{Adam}, etc.).

\begin{figure*}[ht]
  \centering
	\includegraphics[width=7.6cm]{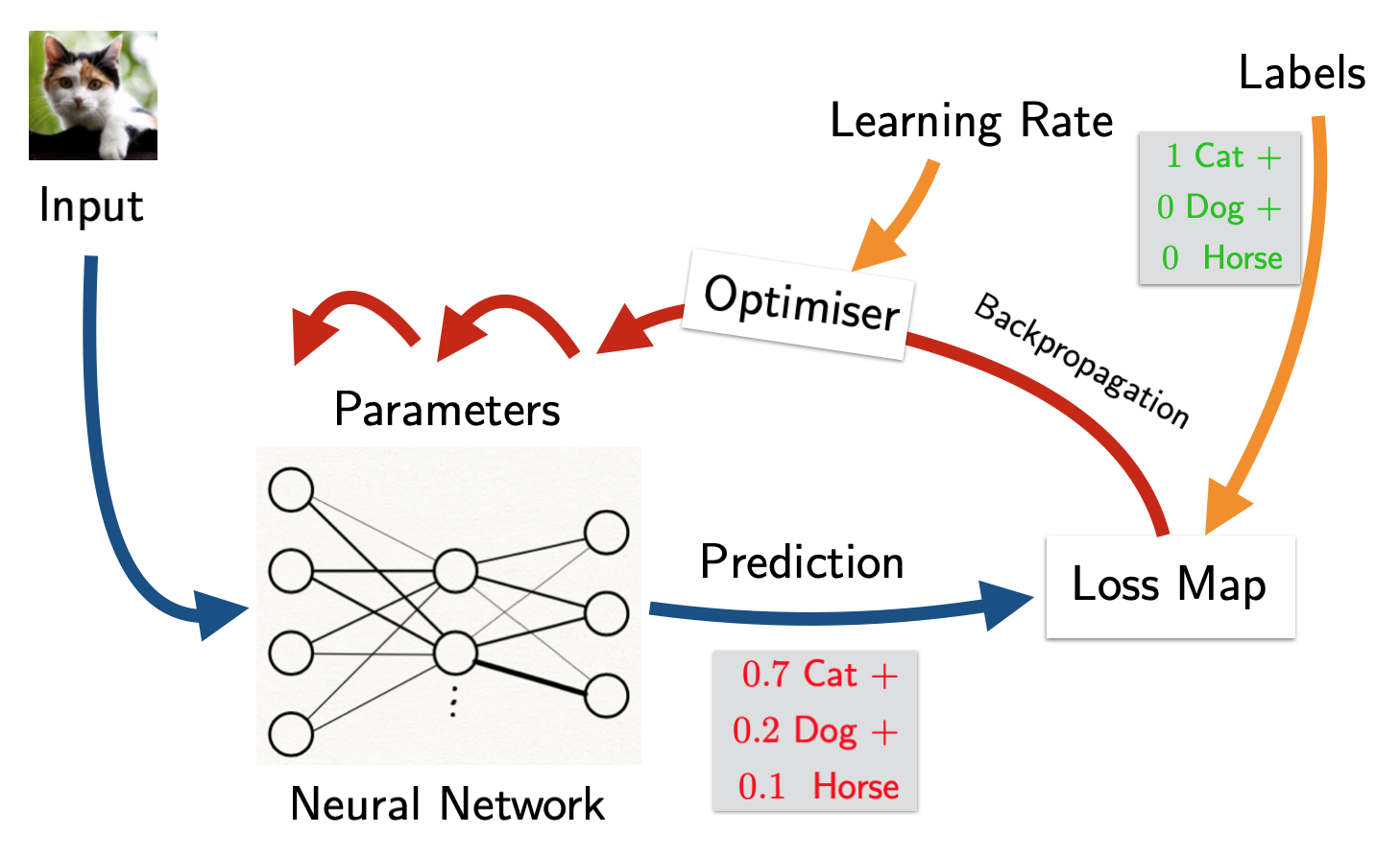} \caption{An
    informal illustration of gradient-based learning. This neural network is
    trained to distinguish different kinds of animals in the input image. Given
    an input $X$, the network predicts an output $Y$, which is compared by a
    `loss map' with what would be the correct answer (`label'). The loss map
    returns a real value expressing the error of the prediction; this
    information, together with the \emph{learning rate} (a weight controlling
    how much the model should be changed in response to error) is used by an
    \emph{optimiser}, which computes by gradient-descent the update of the
    parameters of the network, with the aim of improving its accuracy. The
    neural network, the loss map, the optimiser and the learning rate are all
    components of a supervised learning system, and can vary independently of
    one another.}\label{fig:informalGD}
\end{figure*}

This scenario highlights several questions: is there a uniform mathematical language capturing the different components of the learning process?
 Can we develop a unifying
picture of the various optimisation techniques, allowing for their comparative analysis? Moreover, it should be noted that supervised
learning is not limited to neural networks. For example, supervised learning is surprisingly applicable to the
discrete setting of boolean circuits \cite{rda} where continuous functions are replaced by boolean-valued functions. Can we identify an abstract perspective encompassing both the real-valued and the boolean case? In a nutshell, this paper seeks to answer the question:
 \begin{center}
 \emph{what are the fundamental mathematical structures} \\ \emph{underpinning gradient-based learning?}
 \end{center}

Our approach to this question stems from the identification of three fundamental aspects of the gradient-descent learning process: 

\begin{enumerate}
	\item[(i)] computation is \textbf{parametric}, e.g. in the simplest case we
    are given a function $f : P \times X \to Y$ and learning consists of finding
    a parameter $p : P$ such that $f(p, -)$ is the best function according to
    some criteria. Specifically, the weights on the internal nodes of a neural
    network are a parameter which the learning is seeking to optimize.
    Parameters also arise elsewhere, e.g. in the loss function (see later).
	\item[(ii)] information flows \textbf{bidirectionally}: in the forward direction, the
    computation turns inputs via a sequence of \textit{layers} into predicted
    outputs, and then into a loss value; in the reverse direction,
    backpropagation is used to propagate the changes \textit{backwards} through the layers,
    and then turn them into parameter updates.
	\item[(iii)] the basis of parameter update via gradient descent is \textbf{differentiation} e.g. in the simple case we differentiate the function mapping a parameter to its associated loss to reduce that loss.
  \end{enumerate}

We model bidirectionality via lenses \cite{ProfunctorOptics, Lens,
  BimorphicLenses} and based upon the above three insights, we propose the
notion of \textbf{parametric lens} as the fundamental semantic structure of
learning. In a nutshell, a parametric lens is a process with three kinds of
interfaces: inputs, outputs, and parameters. On each interface, information
flows both ways, i.e. computations are bidirectional. These data are best
explained with our graphical representation of parametric lenses, with inputs
$A$, $A'$, outputs $B$, $B'$, parameters $P$, $P'$, and arrows indicating
information flow (below left). The graphical notation also makes evident that
parametric lenses are \emph{open systems}, which may be composed along their
interfaces (below center and right).
\begin{equation} \label{eq:paralens}
  \begin{gathered}
    \scalebox{0.7}{ \tikzfig{paralens-morphism}}
    \qquad
    \scalebox{0.7}{\tikzfig{paralens-composition}}
    \qquad
    \vcenter{\hbox{\scalebox{0.7}{\tikzfig{paralens-reparametrisation}}}}
  \end{gathered}
\end{equation}
This pictorial formalism is not just an intuitive sketch: as we will show, it
can be understood as a completely formal (graphical) syntax using the formalism
of \emph{string diagrams}~\cite{PiedeleuZanasi23}, in a way similar to how other
computational phenomena have been recently analysed e.g. in quantum theory
~\cite{coecke_kissinger_2017}, control theory ~\cite{CategoriesInControl,
  BonchiSZ17}, and digital circuit theory~\cite{GhicaCircuits}. 

It is intuitively clear how parametric lenses express aspects (I) and (II) above, whereas
(III) will be achieved by studying them in a space of `differentiable objects'
(in a sense that will be made precise). The main technical contribution of our
paper is showing how the various ingredients involved in learning (the model,
the optimiser, the error map and the learning rate) can be uniformly understood as being built from parametric lenses.


\begin{figure}[h]
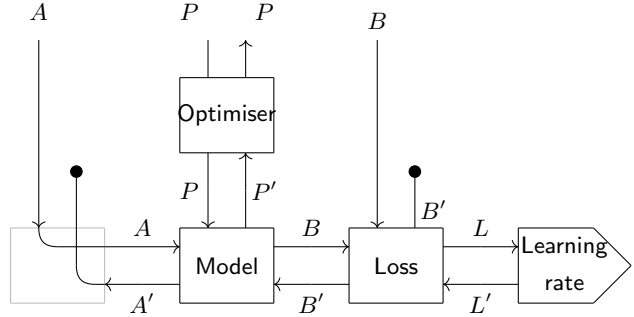

  \centering
  {\scalefont{0.8}
    \tikzfig{paralens-full-learner-opaque}
    }
  
  \caption{The parametric lens that captures the learning process informally sketched in Figure~\ref{fig:informalGD}. Note each component is a lens itself, whose composition yields the interactions described in Figure~\ref{fig:informalGD}. Defining this picture formally will be the subject of Sections~\ref{section:components-as-lenses}-\ref{section:learning-with-lenses}. 
  }
  \label{fig:roadmap}
\end{figure}

We will use \emph{category theory} as the formal language to develop our notion of parametric lenses, and make Figure ~\ref{fig:roadmap} mathematically precise.
The categorical perspective brings several advantages, which are well-known, established principles in programming language semantics~\cite{plotkin_semantics, abramsky_semantics, selinger2001control}. Three of them are particularly important to our contribution, as they constitute distinctive advantages of our semantic foundations:
\begin{description}
	\item[$\mathbf{Abstraction}$] Our approach studies which categorical structures are
    sufficient to perform gradient-based learning. This analysis abstracts away
    from the standard case of neural networks in several different ways: as we
    will see, it encompasses other models (namely Boolean circuits), different
    kinds of optimisers (including Adagrad, Adam, Nesterov momentum), and error
    maps (including quadratic and softmax cross entropy loss). These can be all understood as parametric lenses, and different forms of learning result from their interaction.
	\item[$\mathbf{Uniformity}$] As seen in Figure~\ref{fig:informalGD}, learning involves ingredients that are seemingly quite different: a model, an optimiser, a loss map, etc. We will show how all these notions may be seen as instances of the categorical definition of a parametric lens, thus yielding a remarkably uniform description of the learning process, and supporting our claim of parametric lenses being a fundamental semantic structure of learning.
	\item[$\mathbf{Compositionality}$] The use of categorical structures to describe
    computation naturally enables \emph{compositional reasoning} whereby complex
    systems are analysed in terms of smaller, and hence easier to understand,
    components. Compositionality is a fundamental tenet of programming language
    semantics; in the last few years, it has found application in the  study of
    diverse kinds of computational models, across different fields--- see e.g.
    \cite{FunctorialDataMigration, CompositionalGameTheory,
      coecke_kissinger_2017,BonchiSZ17}. As made evident by
    Figure~\ref{fig:roadmap}, our approach models a neural network as a
    parametric lens, resulting from the \emph{composition} of simpler parametric
    lenses, capturing the different ingredients involved in the learning
    process. Moreover, as all the simpler parametric lenses are themselves
    composable, one may engineer a different learning process by simply plugging
    a new lens on the left or right of existing ones. This means that one can glue together smaller and relatively simple networks to create larger and more sophisticated neural networks.
\end{description}

\noindent We now give a synopsis of our contributions:
\begin{itemize}[leftmargin=*]
	\item In Section~\ref{section:background}, we introduce the tools necessary to
    define our notion of \textbf{parametric lens}. First, in
    Section~\ref{sec:para}, we introduce a notion of parametric categories, which amounts to a functor $\Para(-)$ turning a category $\Ca$ into one $\Para(\Ca)$ of `parametric $\Ca$-maps'. Second, we recall \emph{lenses} (Section~\ref{sec:lenses}). In a nutshell, a lens is a categorical morphism equipped with operations to view and update values in a certain data structure. Lenses play a prominent role in functional programming~\cite{lenses_functional}, as well as in the foundations of database theory~\cite{lenses_database} and more recently game theory~\cite{CompositionalGameTheory}. Considering lenses in $\Ca$ simply amounts to the application of a functorial construction $\Lens(-)$, yielding $\Lens(\Ca)$. Finally, we recall the notion of a \emph{cartesian reverse differential category} (CRDC): a categorical structure axiomatising the notion of differentiation \cite{CRDC} (Section~\ref{sec:crdc}). We wrap up in Section~\ref{sec:paralenses}, by combining these ingredients into the notion of parametric lens, formally defined as a morphism in $\Para(\Lens(\Ca))$ for a CRDC $\Ca$. In terms of our desiderata (I)-(III) above, note that $\Para(-)$ accounts for (I), $\Lens(-)$ accounts for (II), and the CRDC structure accounts for (III).
	\item As seen in Figure~\ref{fig:informalGD}, in the learning process there
    are many components at work: the model, the optimiser, the loss map, the
    learning rate, etc.. In Section~\ref{section:components-as-lenses}, we show
    how the notion of parametric lens provides a uniform characterisation for such components. Moreover, for each of them, we show how different variations appearing in the literature become instances of our abstract characterisation. The plan is as follows: 
	 \begin{itemize}[label=$\circ$,leftmargin=*]
	 \item In Section~\ref{subsec:models}, we show how the combinatorial \textbf{model} subject of the training can be seen as a parametric lens. The conditions we provide are met by the `standard' case of neural networks, but also enables the study of learning for other classes of models. In particular,  another instance are Boolean circuits: learning of these structures is relevant to binarisation~\cite{BinaryConnect} and it has been explored recently using a categorical approach~\cite{rda}, which turns out to be a particular case of our framework. We continue by describing internals of a model, and translating several exampls of models in deep learning to their categorical form. This includes linear layers, biases, activations, convolutional layers, but also general techniques such as weight tying and batching.
	 \item In Section~\ref{subsec:lossmaps}, we show how the \textbf{loss maps}
     associated with training are also parametric lenses. Our
     approach covers the cases of quadratic error, Boolean error, Softmax cross
     entropy, but also the `dot product loss' associated with the phenomenon of deep
     dreaming~\cite{DeepDreaming1, DeepDreaming2, DeepDreaming3, DeepDreaming4}.
		 \item In Section~\ref{subsec:learningrate}, we model the \textbf{learning rate} as a parametric lens. This analysis also allows us to contrast how learning rate is handled in the `real-valued' case of neural networks with respect to the `Boolean-valued' case of Boolean circuits.
\item In Section~\ref{subsec:optimisers}, we show how \textbf{optimisers} can be
  modelled as `reparameterisations' of models as parametric lenses. As case
  studies, in addition to basic gradient ascent and descent, we consider the stateful
  variants: Momentum~\cite{Momentum}, Nesterov Momentum~\cite{NesterovMomentum}, Adagrad~\cite{Adagrad}, and Adam (Adaptive Moment Estimation)~\cite{Adam}, as well as optimiser composition (Subsection \ref{subsubsec:can_we_compose_optimisers}). Also, on Boolean circuits, we show how the reverse derivative ascent of~\cite{rda} can be also regarded in such way.
\end{itemize}
	\item In Section~\ref{section:learning-with-lenses}, we study how the composition of the lenses defined in Section~\ref{section:components-as-lenses} yields a description of different kinds of learning processes.
	\begin{itemize}[label=$\circ$,leftmargin=*]
	\item Section~\ref{subsec:learning-parameters} is dedicated to modelling
    supervised \textbf{learning of parameters}, in the way described in
    Figure~\ref{fig:informalGD}. This amounts essentially to study of the composite
    of lenses expressed in Figure~\ref{fig:roadmap}, for different choices of
    the various components. In particular we look at (i) quadratic loss with basic
    gradient descent, (ii) softmax cross entropy loss with basic gradient
    descent, (iii) quadratic loss with Nesterov momentum, and (iv) learning in
    Boolean circuits with XOR loss and basic gradient ascent.
    \item In Section~\ref{subsec:unsupervised-learning} we describe how a system traditionally considered as unsupervised can be recast to its supervised form: Generative Adversarial Networks (\cite{GAN,arjovsky_wasserstein_2017}). We define this model abstractly as a parametric lens, and describe how a particular instantiation thereof --- Wasserstein GAN (\cite{arjovsky_wasserstein_2017}) --- arises as a supervised learning system with the dot product loss and the gradient descent-ascent optimiser.
	\item In order to showcase the flexibility of our approach, in Section~\ref{subsec:deep_dreaming} we depart from our `core' case study of parameter learning, and turn attention to supervised \textbf{learning of inputs}, also called \textbf{deep dreaming} --- the idea behind this technique is that, instead of the network parameters, one updates the inputs, in order to elicit a particular interpretation~\cite{DeepDreaming1, DeepDreaming2,
  DeepDreaming3, DeepDreaming4}. Deep dreaming can be easily expressed within our approach, with a different rearrangement of the parametric lenses involved in the learning process, see \eqref{eq:deep_dreaming} below. The abstract viewpoint of categorical semantics provides a mathematically precise and visually captivating description of the differences between the usual parameter learning process and deep dreaming.
	\end{itemize} 
\item In Section~\ref{section:implementation} we describe a proof-of-concept Python \textbf{implementation}, available at \cite{python-library}, based on the theory developed in this paper. This code is intended to show more concretely the payoff of our approach. Model architectures, as well as the various components participating in the learning process, are now expressed in a uniform, principled mathematical language, in terms of lenses. As a result, computing network gradients is greatly simplified, as it amounts to lens composition. Moreover, the modularity of this approach allows one to more easily tune the various parameters of training. 
	
	We show our library via a number of experiments, and prove
  correctness by achieving accuracy on par with an equivalent model in Keras, a mainstream deep learning framework \cite{Keras}. In particular, we create a working
  non-trivial neural network model for the MNIST image-classification problem \cite{mnist}. \item Finally, in Sections~\ref{section:related-work} and \ref{section:conclusions}, we discuss related and future work.
\end{itemize} 

Note this paper extends a previous conference version~\cite{CruttwellGGWZ22}.
Section~\ref{subsec:models} has been extended with examples of different architectures and techniques in deep learning, while Section~\ref{subsec:unsupervised-learning}, which considers unsupervised learning, and Remark~\ref{rmk:coalgebra}, followed by a discussion on the axioms of CRDCs needed in our framework, are new. Also, we included missing proofs and complementary background material (see in particular Appendices~\ref{app:para}-\ref{app:CRDC}).

\section{Categorical Toolkit}
\label{section:background}

In this section we describe the three categorical components of our framework, each corresponding to an aspect of gradient-based learning: (I) the $\Para$ construction (Section~\ref{sec:para}), which builds a category of parametric maps, (II) the $\Lens$ construction, which builds a category of ``bidirectional'' maps (Section \ref{sec:lenses}), and (III) the combination of these two constructions into the notion of ``parametric lenses'' (Section \ref{sec:paralenses}). Finally (IV) we recall Cartesian reverse differential categories --- categories equipped with an abstract gradient operator. 

\paragraph*{Notation} We shall use $f;g$ for sequential composition of morphisms $f \colon A \to B$ and $g \colon B \to C$ in a category, $1_A$ for the identity morphism on $A$, and $I$ for the unit object of a symmetric monoidal category.



\subsection{Parametric Maps}\label{sec:para}

In supervised learning one is typically interested in approximating a function $g: \Rb^n \to \Rb^m$
  for some $n$ and $m$.
  To do this, one begins by building a neural network, which is a smooth map 
	$f: \Rb^p \times \Rb^n \to \Rb^m$
where $\Rb^p$ is the set of possible weights of that neural network.  Then one
looks for a value of $q \in \Rb^p$ such that the function $f(q,-): \Rb^n \to
\Rb^m$ closely approximates $g$.  We formalise these maps categorically via the $\Para$ construction
\cite{BackpropAsFunctor, CompDL, TowardsCatCyber, CatsOfPossibleWorlds}.  

\begin{defi}[Parametric category] \label{def:paracat}
Let $(\Ca,\otimes, I)$ be a strict\footnote{One can also define $\Para(\Ca)$ in the case when $\Ca$ is non-strict; however, the result would be not a category but a bicategory.} symmetric monoidal category. We define a category $\Para(\Ca)$ with 

\vspace*{-0.3cm}
\begin{itemize}
  \item objects those of $\Ca$;
  \item a map from $A$ to $B$ is a pair $(P,f)$, with $P$ an object of $\Ca$ and $f: P \otimes A \to B$;
  \item the identity on $A$ is the pair $(I,1_A)$ (since $\otimes$ is strict monoidal, $I \otimes A = A$);
  \item the composite of maps $(P,f): A \to B$ and $(P',f'): B \to C$ is the pair $(P' \otimes P, (1_{P'} \otimes f);f')$;
\end{itemize}
\end{defi}

\begin{exa}\label{ex:parasmooth}
Take the category $\Smooth$ whose objects are natural numbers and whose morphisms $f: n \to m$ are smooth maps from $\Rb^n$ to $\Rb^m$.
This is a strict symmetric monoidal category with product given by addition.
As described above, the category $\Para(\Smooth)$ can be thought of as a category of neural networks: a map in this category from $n$ to $m$ consists of a choice of $p$ and a map $f: \Rb^p \times \Rb^n \to \Rb^m$ with $\Rb^p$ representing the set of possible weights of the neural network.  
\end{exa}

As we will see in the next sections, the interplay of the various components at work in the learning process becomes much clearer once represented the morphisms of $\Para(\Ca)$ using the pictorial formalism of \emph{string diagrams}, which we now recall. In fact,  we will mildly massage the traditional notation for string diagrams (below left), by representing a morphism $f \colon A \to B$ in
$\Para(\Ca)$ as below right.
\[
  \scalebox{0.7}{\tikzfig{para-underlying-map}}
\qquad \qquad\qquad \scalebox{0.7}{\tikzfig{para-doublecat}}
\]
This is to emphasise the special role played by $P$, reflecting the fact that in machine learning data and parameters have different semantics. String diagrammatic notations also allows to neatly represent composition of maps $(P,f): A \to B$ and $(P',f'): B \to C$ (below left), and ``reparameterisation'' of $(P,f): A \to B$ by a map $\alpha: Q \to P$ (below right), yielding a new map $(Q, (\alpha \otimes 1_A);f): A \to B$. 

\begin{equation}\label{eq:para-composition-doublecat}
  \begin{gathered}
    \scalebox{0.8}{\tikzfig{para-composition-doublecat}}
    \qquad \qquad \qquad
    \scalebox{0.8}{\tikzfig{para-reparametrisation-doublecat}}
  \end{gathered}
\end{equation} 
Intuitively, reparameterisation changes the parameter space of $(P,f): A \to B$ to some other object $Q$, via some map
$\alpha: Q \to P$.  We shall see later that gradient descent and its many
variants can naturally be viewed as reparameterisations.

Note coherence rules in combining the two operations in \eqref{eq:para-composition-doublecat}  just work as expected, as these diagrams can be ultimately `compiled' down to string diagrams for monoidal categories.

\subsection{Lenses}\label{sec:lenses}

In machine learning (or even learning in general) it is fundamental that information flows
both forwards and backwards: the `forward' flow corresponds to a model's predictions,
and the `backwards' flow to \emph{corrections} to the model.
The category of lenses is the ideal setting to capture this type of structure,
as it is a category consisting of maps with both a ``forward'' and a
``backward'' part.

\begin{defi}
For any Cartesian category $\Ca$, the category of (bimorphic) lenses in $\Ca$, $\Lens(\Ca)$, is
the category with the following data: 
\vspace*{-0.3cm}
\begin{itemize}
  \item Objects are pairs $(A,A')$ of objects in $\Ca$, written as $\diset{A}{A'}$;
  \item A map from $\diset{A}{A'}$ to $\diset{B}{B'}$ consists of a pair $(f, f^*)$ (also written as $\diset{f}{f^*}$) where $f: A \to B$ (called the \textbf{get} or \textbf{forward} part of the lens) and $f^*: A \times B' \to A'$ (called the \textbf{put} or \textbf{backwards} part of the lens);
  \item The identity on $\diset{A}{A'}$ is the pair $\diset{1_A}{\pi_1}$;
  \item The composite of $\diset{f}{f^*} : \diset{A}{A'} \to \diset{B}{B'}$ and $\diset{g}{g^*}: \diset{B}{B'} \to \diset{C}{C'}$ is given by get $f ; g$ and put $\<\pi_0,\<\pi_0;f, \pi_1\>;g^*\>;f^*$. 
\end{itemize}

\end{defi}

The embedding of $\Lens(\Ca)$ into the category of Tambara modules over $\Ca$ (see \cite[Thm. 23]{graphical_optics}) provides a rich string diagrammatic language, in which lenses may be represented with forward/backward wires indicating the information flow.  In this language, a morphism $\diset{f}{f^*}: \diset{A}{A'} \to \diset{B}{B'}$ is written as below left, which can be `expanded' as below right.

\begin{equation*}
  \scalebox{0.8}{\tikzfig{lens-black-box}} \qquad \qquad \scalebox{0.6}{\tikzfig{lens-get-put-exposed}}
\end{equation*}
It is clear in this language how to describe the composite of $\diset{f}{f^*} : \diset{A}{A'} \to \diset{B}{B'}$ and $\diset{g}{g^*}: \diset{B}{B'} \to \diset{C}{C'}$:
\begin{equation}\label{eq:lens-composition}
  \scalebox{0.55}{\tikzfig{lens-composition}}
\end{equation}

\begin{rem}
  \label{remark:lens_monoidal}
Note $\Lens(\Ca)$ is a monoidal category, with $\diset{A}{A'} \otimes \diset{B}{B'}$ defined as $\diset{A \times B}{A' \times B'}$.
However, in general $\Lens(\Ca)$ is not itself Cartesian.
This is easy to see when looking at even a terminal object: if $T$ is a terminal object in $\Ca$, then in general $\diset{T}{T}$ will not be a terminal object in $\Lens(\Ca)$ --- it if was, there would be a unique lens $\diset{!_A}{!_A^*}: \diset{A}{A'} \to \diset{T}{T}$ whose put part would need to be a (unique) map $A \times T \to A'$, but in general there are many such maps.  
\end{rem}

\subsection{Parametric Lenses}\label{sec:paralenses}

The fundamental category where supervised learning takes place is the composite $\Para(\Lens(\Ca))$ 
of the two constructions in the previous sections:
\begin{defi} The category $\Para(\Lens(\Ca))$ of \textbf{parametric lenses} on $\Ca$ is defined as follows.
\vspace*{-0.3cm}
  \begin{itemize}
    \item Its objects are pairs of objects $\diset{A}{A'}$ of objects from $\Ca$;
    \item  A morphism from $\diset{A}{A'}$ to $\diset{B}{B'}$, called a parametric
    lens\footnote{In \cite{BackpropAsFunctor}, these are called
      \textit{learners}. However, in this paper we study them in a much broader
      light; see Section \ref{section:related-work}.}, is a choice of parameter pair $\diset{P}{P'}$ and a lens $\diset{f}{f^*}: \diset{P}{P'} \times \diset{A}{A'} \to \diset{B}{B'}$ where $f: P \times A \to B$ and $f^*: P \times A \times B' \to P' \times A'$
  \end{itemize}
\end{defi}
String diagrams for parametric lenses are built by simply composing the graphical languages of the previous two sections --- see \eqref{eq:paralens}, where respectively a morphism, a composition of morphisms, and a reparameterisation are depicted.

Given a generic morphism in $\Para(\Lens(\Ca))$ as depicted in \eqref{eq:paralens} on the left, one can see how it is possible to  ``learn'' new values from $f$: it takes as input an input $A$, a parameter $P$, and a change $B'$, and outputs a change in $A$, a value of $B$, and a change $P'$.  This last element is the key component for supervised learning: intuitively, it says how to change the parameter values to get the neural network closer to the true value of the desired function.

The question, then, is how one is to define such a parametric lens given nothing more than a neural network, ie., a parametric map $(P,f): A \to B$.  This is precisely what the gradient operation provides, and its generalization to categories is explored in the next subsection.

\subsection{Cartesian Reverse Differential Categories}\label{sec:crdc}

Fundamental to all types of gradient-based learning is, of course, the gradient
operation.
In most cases this gradient operation is performed in the category of smooth
maps between Euclidean spaces.
However, recent work \cite{rda} has shown that gradient-based learning can also
work well in other categories; for example, in a category of boolean circuits.
Thus, to encompass these examples in a single framework, we will work
in a category with an abstract gradient operation.



\begin{defi}\label{defn:CRDC} A \textbf{Cartesian left additive category}~\cite[Defn. 1]{CRDC} 
    consists of a category $\Ca$ with chosen finite products (including a
    terminal object), and an addition operation and zero morphism in each
    homset, satisfying various axioms. A \textbf{Cartesian reverse differential category} (CRDC)~\cite[Defn. 13]{CRDC} consists of a Cartesian left additive category $\Ca$, together with an operation which provides, for each map $f: A \to B \mbox{ in } \Ca$, a map $R[f]: A \times B \to A$ satisfying various axioms (see (Def.\ \ref{def:crdc})).
\end{defi}

For $f: A \to B$, the pair $\diset{f}{R[f]}$ forms a lens from $\diset{A}{A}$ to $\diset{B}{B}$. We will pursue the idea that $R[f]$ acts as backwards map, thus giving a means to ``learn''$f$.

Note that assigning type $A \times B \to A$ to $R[f]$ hides some relevant information: $B$-values in the domain and $A$-values in the codomain of $R[f]$ do not play the same role as values of the same types in $f \colon A \to B$: in $R[f]$, they really take in a tangent vector at $B$ and output a tangent vector at $A$ (\emph{cf.} the definition of $R[f]$ in $\Smooth$, Example~\ref{ex:smoothrdc} below).
To emphasise this, we will type $R[f]$ as a map $A \times B' \to A'$ (even though in reality $A = A'$ and $B=B'$), thus meaning that $\diset{f}{R[f]}$ is actually a lens from $\diset{A}{A'}$ to $\diset{B}{B'}$.
This typing distinction will be helpful later on, when we want to add additional components to our learning algorithms. 






The following two examples of CRDCs will serve as the basis for the learning scenarios of the upcoming sections.

\begin{exa}\label{ex:smoothrdc}
The category $\Smooth$ (Example \ref{ex:parasmooth}) is Cartesian with product given by addition, and it is also
 a Cartesian reverse differential category: given a smooth map $f: \R^n \to \R^m$, the map $R[f] \colon \R^n \times \R^m \to \R^n$ sends a pair $(x,v)$ to $J[f]^{T}(x)\cdot v$: the transpose of the Jacobian of
$f$ at $x$ in the direction $v$. For example, if $f: \R^2 \to \R^3$ is defined as $f(x_1,x_2) := (x_1^3 + 2x_1x_2,x_2,\sin(x_1))$, then $R[f]: \R^2 \times \R^3 \to \R^2$ is given by
  $$(x,v) \mapsto
  \begin{bmatrix}
  3x_1^2 + 2x_2 & 0 & \cos(x_1) \\
  2x_1 & 1 & 0
  \end{bmatrix}
  \cdot
  \begin{bmatrix}
  v_1 \\
  v_2 \\
  v_3
  \end{bmatrix}
  .$$
Using the reverse derivative (as opposed to the forward derivative) 
is well-known to be much more computationally efficient for functions $f: \R^n \to \R^m$ when $m \ll n$ (for example, see \cite{griewank_walther}), as is the case in most supervised learning situations (where often $m = 1$).
\end{exa}

\begin{exa}\label{ex:polysmooth}
  Another CRDC is the symmetric monoidal category $\PolyZ$ \cite[Example 14]{CRDC} with objects the natural numbers and morphisms $f \colon A \to B$
  the $B$-tuples of polynomials $\Z_2[x_1 \ldots x_A]$.
  When presented by generators and relations these morphisms can be viewed as a
  syntax for boolean circuits, with parametric lenses for such circuits (and their reverse derivative) described in
  \cite{rda}.  
\end{exa}

\begin{rem}\label{rmk:coalgebra}
  The definition of a CRDC (see Def.\ \ref{def:crdc}) satisfies 7 axioms describing the interaction of the reverse differential operator with the rest of the cartesian left-additive structure.
  As pointed out in \cite[Sec. 2.3]{cockett_differential_2014}, the last two axioms are independent of the others.
  In this paper we will additionally see that these two axioms are also not needed to compositionally model the update step of supervised learning.
\end{rem}

The remark above can be stated abstractly in two steps.
Firstly, we note that a CRDC $\Ca$ for each morphism $f$ defines a lens $\diset{f}{R[f]}$ whose backwards map is additive in the second component (see Def.\ \ref{def:additive_second_variable} for the definition of additivity in the second component).
This defines a subcategory $\Lens_A(\Ca)$ of $\Lens(\Ca)$, and the following functor.
\begin{restatable}{theorem}{LensFunctorCLAC}
  \label{thm:lens_functor_clac}
  Lenses with backward passes additive in the second component form a functor
  \[
    \Lens_A : \CLACat \to \CLACat
  \]
\end{restatable}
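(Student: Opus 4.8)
The plan is to establish three things: that $\Lens_A(\Ca)$ is a well-defined subcategory of $\Lens(\Ca)$, that it is itself a Cartesian left additive category, and that a Cartesian left additive functor $F$ lifts to one on $\Lens_A$, functorially. Throughout I will write backward parts ``pointwise'' as the paper does, with the understanding that additivity in the second variable (Def.~\ref{def:additive_second_variable}) means $f^*(a,0)=0$ and $f^*(a,b+b')=f^*(a,b)+f^*(a,b')$, read as identities of morphisms assembled from the left additive structure. For the subcategory part: the put part $\pi_1$ of the identity lens $\diset{1_A}{\pi_1}$ is additive in its second variable; and the put part of $\diset{f}{f^*};\diset{g}{g^*}$ sends $(a,c')$ to $f^*(a,g^*(f(a),c'))$, which is additive in $c'$ because $g^*$ is additive in $c'$ and $f^*$ in its second argument. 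So $\Lens_A(\Ca)$ is closed under identities and composition.

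The crux is the Cartesian left additive structure. I claim the object that is merely the monoidal product of $\Lens(\Ca)$, namely $\diset{A}{A'}\times\diset{B}{B'}:=\diset{A\times B}{A'\times B'}$ (Remark~\ref{remark:lens_monoidal}), together with $\diset{T}{T}$ for $T$ terminal in $\Ca$, gives genuine finite products in $\Lens_A(\Ca)$. The projection $\diset{A\times B}{A'\times B'}\to\diset{A}{A'}$ is $\diset{\pi_0}{\langle\pi_1,0\rangle}$, discarding the $B'$-component with a zero morphism; the pairing of $\diset{g}{g^*},\diset{h}{h^*}\colon\diset{C}{C'}\to\diset{A}{A'},\diset{B}{B'}$ has put part $(c,(a',b'))\mapsto g^*(c,a')+h^*(c,b')$, using the additive structure of $\Ca$; both are again additive in the second variable, so stay in $\Lens_A(\Ca)$. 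The projection equations hold because composing a pairing through a projection feeds a zero into the discarded put argument and $h^*(c,0)=0$ — this is precisely the step that fails in $\Lens(\Ca)$ without the additivity restriction. Uniqueness of mediating maps is forced the same way: any $\diset{m}{m^*}$ satisfying the cone equations has $m^*(c,(a',0))=g^*(c,a')$ and $m^*(c,(0,b'))=h^*(c,b')$, and then $m^*(c,(a',b'))=m^*(c,(a',0)+(0,b'))=g^*(c,a')+h^*(c,b')$ by additivity in the second variable. For the terminal object, a lens into $\diset{T}{T}$ has forward part forced to $!_A$, and any admissible put part $k\colon A\times T\to A'$ satisfies $k\circ\langle\pi_0,0\rangle=0$; since $T$ is terminal the zero map $A\times T\to T$ coincides with $\pi_1$, so $\langle\pi_0,0\rangle=1_{A\times T}$ and hence $k=0$. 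Thus $\diset{!_A}{0}$ is the unique such lens — which also settles the concern raised in Remark~\ref{remark:lens_monoidal}. Finally I set $\diset{f}{f^*}+\diset{g}{g^*}:=\diset{f+g}{f^*+g^*}$ with zero $\diset{0}{0}$ (a sum of maps additive in the second variable is again such), and check the left additive axioms and their compatibility with the products above; each reduces to the corresponding identity in $\Ca$ together with additivity of the backward maps involved, and is routine.

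For the action on functors: a Cartesian left additive functor $F\colon\Ca\to\Da$ induces $\Lens_A(F)$ by $\diset{A}{A'}\mapsto\diset{FA}{FA'}$ on objects and $\diset{f}{f^*}\mapsto\diset{Ff}{\,Ff^*\circ\varphi\,}$ on morphisms, where $\varphi\colon FA\times FB'\xrightarrow{\sim}F(A\times B')$ is the product comparison (the identity when $F$ preserves products strictly). Since $F$ preserves finite products, zeros, and homset sums, it preserves the property ``additive in the second variable'', so this is well defined on morphisms; it preserves identities and lens composition, both being built from products and composition, which $F$ respects; and it preserves the Cartesian left additive structure of $\Lens_A(-)$, which is built from products, zeros, and sums. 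That $\Lens_A(1_\Ca)=1_{\Lens_A(\Ca)}$ and $\Lens_A(G\circ F)=\Lens_A(G)\circ\Lens_A(F)$ is then immediate from the definitions.

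I expect the only non-bookkeeping point to be the Cartesian structure of $\Lens_A(\Ca)$: one must see exactly why restricting backward passes to be additive in the second variable turns the monoidal product of $\Lens(\Ca)$ — which is not a categorical product there — into one, the additivity hypothesis being used essentially to force uniqueness of mediating morphisms and terminality of $\diset{T}{T}$. The remaining verifications (the left additive axioms, compatibility with products, and functoriality of the lift) are routine.
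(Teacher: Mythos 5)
Your proposal is correct and follows the same overall decomposition as the paper's proof (Definition~\ref{def:lensa} and Propositions~\ref{prop:lens_cart_left_additive}, \ref{prop:lens_functor_cart_left_add} in Appendix~\ref{app:CRDC}): identify $\Lens_A(\Ca)$ as a wide subcategory closed under identities and composition, equip it with Cartesian left additive structure, and lift a Cartesian left additive functor $F$ by transporting the backward pass along the product comparison. The one place you go further than the paper is the Cartesian structure itself: the paper's proof of Prop.~\ref{prop:lens_cart_left_additive} only exhibits the commutative monoid on each object and treats the finite products as given, whereas you explicitly verify that the monoidal product of $\Lens(\Ca)$ becomes a genuine categorical product once backward passes are required to be additive in the second variable --- using that additivity both to make the projection equations hold (a zero is fed into the discarded put argument) and to force uniqueness of mediating maps and terminality of $\diset{T}{T}$. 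Given Remark~\ref{remark:lens_monoidal}, which notes that $\Lens(\Ca)$ is \emph{not} Cartesian, this is a step that genuinely needs the argument you give, so your write-up is if anything more complete on this point; your use of the hom-set addition $\diset{f+g}{f^*+g^*}$ rather than the paper's monoid-on-objects formulation is an equivalent presentation of the same structure.
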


\begin{proof}
  See Appendix.
  Definition \ref{def:clacat} contains the definition of the category $\CLACat$, Definition \ref{def:lensa} the definition of $\Lens_A(\Ca)$,
  Prop.\ \ref{prop:lens_cart_left_additive} the proof that $\Lens_A(\Ca)$ is a cartesian left-additive category and  Prop.\ \ref{prop:lens_functor_cart_left_add} the action of $\Lens_A$ on morphisms. 
\end{proof}

The second step is the observation that a coalgebra of this functor gives us a choice of a cartesian left-additive category $\Ca$ and a cartesian left-additive functor $\RC : \Ca \to \Lens_A(\Ca)$ such that a number of axioms are satisfied: precisely the first five axioms of a CRDC.

\begin{restatable}[{(compare \cite[Prop. 31]{CRDC})}]{prop}{FiveOutOfSeven}
  \label{prop:gcrdc}
  A coalgebra of the copointed $\Lens_A$ endofunctor gives rise to a cartesian left-additive category $\Ca$ equipped with a reverse differential combinator $R$ which satisfies the first five axioms of a cartesian reverse derivative category.
\end{restatable}

\begin{proof}
  A coalgebra of $\Lens_A$ consists of a category $\Ca$ and a cartesian left-additive functor $\RC : \Ca \to \Lens_A(\Ca)$ such that the following diagram commutes.
  \begin{equation}
    \label{eq:lens_projection}
    \begin{tikzcd}
        {\Lens_A(\Ca)} & \Ca \\
        \Ca
        \arrow["\RC", from=2-1, to=1-1]
        \arrow["{\epsilon_{\Ca}}", from=1-1, to=1-2]
        \arrow[Rightarrow, no head, from=2-1, to=1-2]
      \end{tikzcd}
  \end{equation}
  The data of the functor $\RC$ is equivalent to the data of a reverse differential combinator $R$: every map $f : A \to B$ in $\Ca$ is mapped it to a lens whose forward part is by the Eq.\ \ref{eq:lens_projection} above restricted to be $f$ itself, leaving the only choice involved in this functor the one of the backward part.
  What remains to prove is that this backward part satisfies the first five axioms of a CRDC.
  We do this in Appendix~\ref{app:CRDC}.
\end{proof}

We will see in the next section how only the data of the functor $\RC : \Ca \to \Lens_A(\Ca)$ is used to model supervised learning, justifying our claim that only the first five axioms of a CRDC are used.


\section{Components of learning as Parametric Lenses}
\label{section:components-as-lenses}

As seen in the introduction, in the learning process there are many components at work: a model, an optimiser, a loss map, a learning rate, etc. In this section we show how each such component can be understood as a parametric lens.  Moreover, for each component, we show how our framework encompasses several variations of the gradient-descent algorithms, thus offering a unifying perspective on many different approaches that appear in the literature.

\subsection{Models as Parametric Lenses}\label{subsec:models}

We begin by characterising the models used for training as parametric lenses. In essence, our approach identifies a set of abstract requirements necessary to perform training by gradient descent, which covers the case studies that we will consider in the next sections. 

The leading intuition is that a suitable model is a parametric map, equipped with a reverse derivative operator. Using the formal developments of Section~\ref{section:background}, this amounts to assuming that a model is a morphism in $\Para(\Ca)$, for a CRDC $\Ca$. In order to visualise such morphism as a parametric lens, it then suffices to apply under $\Para(-)$ the canonical morphism $\RC \colon \Ca \to \Lens(\Ca)$ (which exists for any CRDC $\Ca$, see Prop.\ \ref{prop:gcrdc})\footnote{Here we are treating $\RC$ as postcomposed with the inclusion $\Lens_A(\Ca) \hookrightarrow \Lens(\Ca)$.}, mapping $f$ to $\diset{f}{R[f]}$. This yields a functor $ \Para(\RC): \Para(\Ca) \to \Para(\Lens(\Ca))$, pictorially defined as



\begin{equation}\label{eq:para_rdc}
\scalebox{0.7}{\tikzfig{para-doublecat}}
 \qquad \mapsto \qquad \scalebox{0.6}{\tikzfig{paralens-internals}}
\end{equation}

\begin{exa}[Neural networks] \label{ex:smooth-lens}
As noted previously, to learn a function of type $\Rb^n \to \Rb^m$, one constructs a neural network, which can be seen as a function of type $\Rb^p \times \Rb^n \to \Rb^m$ where $\Rb^p$ is the space of parameters of the neural network.  As seen in Example~\ref{ex:parasmooth}, this is a map in the category $\Para(\Smooth)$ of type $\Rb^n \to \Rb^m$ with parameter space $\Rb^p$.  Then one can apply the functor in \eqref{eq:para_rdc} to present a neural network together with its reverse derivative operator as a parametric lens, i.e. a morphism in $\Para(\Lens(\Smooth))$.
\end{exa}

\begin{exa}[Boolean and Polynomial circuits]
For learning of Boolean circuits as described in \cite{rda}, the recipe is the same as in Example~\ref{ex:smooth-lens}, except that the base category is $\PolyZ$ (see Example~\ref{ex:polysmooth}). The important observation here is that $\PolyZ$ is a CRDC, see~\cite{CRDC,rda}, and thus we can apply the functor in \eqref{eq:para_rdc}. Note this setting can be generalised to circuits over any polynomial ring, see~\cite{WilsonZ22}. 
\end{exa}

Note a model/parametric lens $f$ takes as inputs an element of $A$, a parameter $P$, an element of $B'$ (a change in $B$) and outputs an element of $B$, a change in $A$, and a change in $P$.  This is not yet sufficient to do machine learning!  When we perform  learning, we want to input a parameter $P$ and a pair $A \times B$ and receive a new parameter $P$.  Instead, $f$ expects a change in $B$ (not an element of $B$) and outputs a change in $P$ (not an element of $P$).  Deep dreaming, on the other hand, wants to return an element of $A$ (not a change in $A$).  Thus, to do machine learning (or deep dreaming) we need to add additional components to $f$; we will consider these additional components in the next sections.

We now proceed to describe the internals of a model, and translate several examples of models in deep learning to their categorical form.
But before doing so, we clarify some terminology.
While `layers' and `models' are both parametric maps, the former typically
refers to \emph{components} of larger models, while the latter refers to the
final model to be learned in the manner described in Section
\ref{subsec:learning-parameters}).

\begin{rem}
  \label{remark:hidden-layer}
  An unfortunate ambiguity in deep learning terminology is the second meaning of `layer'.
  For example, the `hidden layer' of a model refers to internal \emph{values} of
  a neural network, corresponding to the `wires' of a string diagram.
  We will avoid using the term `layer' in this sense unless explicitly noted.
\end{rem}

In deep learning, one often speaks of `model architectures'.
This can mean either a specific model (e.g., ResNet50~\cite{he2015deep}) or a
family of models employing a particular technique.
For example, one says a model has a `convolutional architecture' when it
contains a convolutional layer (Example \ref{example:convolutional-layer}).
Examples of layers, models, and architectures are given in the following sections.

\subsubsection{Layers}

The \emph{dense} or \emph{fully-connected} layer is a component of many neural
network architectures.
In the categorical viewpoint, a dense layer is the composition of linear,
bias, and activation layers, which we describe first.
Unless explicitly noted, we will assume for simplicity that most layers are maps
in $\Para(\Smooth)$.
However, many of the maps defined here only require a \emph{multiplication} map
as additional structure, and so can be defined in any cartesian distributive
category~\cite{polycirc} such as $\POLY_S$.

\begin{exa}[Linear layer]
  A \textbf{linear} or \textbf{matrix-multiply} layer is the parametric map
  $(\Rb^{n m}, \mathsf{linear} : \Rb^{nm} \times \Rb^n \to \Rb^m)$,
  where $\mathsf{linear}(M, x) = M \cdot x$
  is the matrix-vector product of $M$ interpreted as matrix coefficients and the
  vector $x$.
\end{exa}

Note that layers are best thought of as \emph{families} of maps.
For example, there is a linear layer morphism for all choices of dimension $m$
and $n$.

\begin{exa}[Bias layer]
  A bias layer is a parametric map $(\Rb^n, +)$, where $+ : \Rb^n \times \Rb^n \to \Rb^n$ is
  the cartesian left-additive addition map.
  The reverse derivative of $+$ is the copy map, so we may depict the bias layer
  as a parametric lens as below.
  \[ \tikzfig{layer-bias} \]
\end{exa}

An \emph{activation layer} $(I, \alpha : A^n \to A^n)$ is a typically nonlinear,
trivially parametric map often applied to the output of another layer.
Many are simply the $n$-fold tensor product of a univariate function $A \to A$.

\begin{exa}[Sigmoid Activation]
  The \textbf{sigmoid} activation layer $(I, \mathsf{sigmoid} : \Rb^n \to \Rb^n)$
  is the $n$-fold tensor product
  $\sigma \times \overset{n}{\ldots} \times \sigma$
  of the \textbf{sigmoid function} $\sigma(x) = \frac{\exp(x)}{\exp(x) + 1}$.
\end{exa}

Note that unlike other layers considered so far, while \textbf{sigmoid} is a map
in $\Smooth$, it is not a map in $\POLY_S$.
An example of an activation layer which is \emph{not} in $\Smooth$ is the ReLU
map.

\begin{exa}[ReLU Activation]
  The `Rectified Linear Unit' activation function is the map
  $ \mathsf{ReLU}(x) = \delta_{>0}(x) \cdot x $
  where $\delta_{>0}$ is the positive indicator function.
  The $\mathsf{ReLU}$ activation \emph{layer} $\mathsf{ReLU} : A^n \to A^n$ is
  again the $n$-fold tensor product of this function.
  Although ReLU is not a smooth map, some presentations of RDCs can be extended
  via Theorem 3.1 of~\cite{polycirc} to include the positive indicator function
  $\delta_{>0} : A \to A$.
  The $\mathsf{ReLU}$ function can then be expressed in terms of this function,
  and its reverse derivative can be derived as
  $\mathsf{R}[\mathsf{ReLU}](x, \delta_y) = \delta_{>0}(x) \cdot \delta_y$.
\end{exa}

The combination of linear, bias, and choice of activation layer gives a \emph{dense} or
\emph{fully-connected} layer.

\begin{exa}[Dense Layer]
  \label{example:dense-layer}
  A \textbf{dense} or \textbf{fully-connected} layer is the following composition.
  \begin{equation}\label{eq:paralens-dense-layer}
     \scalebox{0.8}{\tikzfig{paralens-dense-layer}}
  \end{equation}
  Where some choice of \emph{input dimension} $m \in \Nb$, \emph{output dimension}
  $m \in \Nb$, and \emph{activation layer} $\alpha : \Rb^m \to \Rb^n$ is
  assumed.
  Note that the activation layer has no parameters.
\end{exa}

The final two examples of layers we cover here are \emph{convolutional} and
\emph{max-pooling} layers, which are common in models for image-processing
tasks.

\begin{exa}[Convolutional Layer]
  \label{example:convolutional-layer}
  A \textbf{convolutional} layer is a map
  $(\Rb^{k^2}, \mathsf{convolve2D} : \Rb^{k^2} \times \Rb^{m^2} \to \Rb^{n^2})$
  where $\mathsf{convolve2D}$ denotes the discrete 2D convolution of a
  $k \times k$ kernel and an $m \times m$ image.
  The output of a convolutional layer is an $n \times n$ image with
  $n = \max(m, k) - \min(m, k) + 1$.
\end{exa}

A number of further variations of the convolutional layer exist in the
literature, but the basic idea is to use 2D convolution to map a kernel (the
parameters) over the input.
Convolutional layers are frequently composed with max-pooling layers.

\begin{exa}[Max-Pooling Layer]
  \label{example:convolutional-layer}
  A \textbf{max-pooling} layer $(I, \mathsf{maxpool} : S^{(kn)^2} \to S^n)$ computes
  the maximum of each of the $n^2$ size-$k \times k$ subregions of the input image.
\end{exa}

However, as with the $\mathsf{ReLU}$ layer, max-pooling layers cannot be thought
of as maps in $\Smooth$.
Nevertheless, by again appealing to \cite[Theorem 3.1]{polycirc}, one can extend
a presentation of RDCs to include a function $\max : 2 \to 1$ from which the
max-pooling layer and its reverse derivative can be derived.

\subsubsection{Architectures}

We now consider some examples of neural network architectures defined in terms
of the layers above.
Since both layers and architectures are just parametric maps, we can consider
the layers by themselves as architectures already,
and in fact the $\emph{linear}$ and $\emph{dense}$ layers are sufficient to
solve some simple machine learning problems.
The first non-trivial architecture we consider is the `single hidden layer'
network.

\begin{exa}[Hidden Layer Network]
  A neural network with a single `hidden layer' (in the sense of Remark \ref{remark:hidden-layer})
  is the composition of two $\mathsf{dense}$ maps.
  \[ \scalebox{0.8}{\tikzfig{architecture-hidden}} \]
  We emphasize that the term `hidden layer' here ambiguously refers to the
  central \emph{wires} labeled $\Rb^b$ rather than the \textsf{dense} morphisms.
\end{exa}

This architecture is demonstrated in detail in the
experiments~\cite{python-library} accompanying this paper.
Also included in our experiments is a convolutional model for classifying images
of handwritten digits (the MNIST~\cite{mnist} dataset).
A simplified version is below.

\begin{exa}[Convolutional Architecture]
  First, define a $\mathsf{CPR}$ layer as the composition of a convolution, max-pooling, and $\mathsf{ReLU}$ layer.
  \[ CPR \: = \: \scalebox{0.8}{\tikzfig{architecture-convolutional-CPR}} \]
  where $n = \max(m, k) - \min(m, k) + 1$.
  One can then define a convolutional architecture for
  classifying $28 \times 28$-pixel images of the MNIST dataset into digits $0-9$
  as follows.
  \[ \scalebox{0.8}{\tikzfig{architecture-convolutional}} \]
\end{exa}

Lastly, we mention the architecture of Generative Adversarial Networks which e define and thoroughly discuss in Section \ref{subsec:unsupervised-learning}.

\subsubsection{Weight Tying and Batching}

A number of general techniques are employed in designing deep learning models.
We now describe two examples of these techniques in terms of their categorical interpretations.
The first is \emph{weight-tying}, which is required in Subsection
\ref{subsec:unsupervised-learning} to define a more complex architecture for
unsupervised learning: the GAN.

\begin{exa}[Weight Tying]
  Weight tying is the sharing of parameters between two different components.
  Categorically speaking, this means using the copy map on parameters as below.
  \[ \scalebox{0.8}{\tikzfig{technique-weight-tying}} \]
  Note that $f$ and $g$ have the same parameters, but might be applied to
  different parts of the input.
  In this sense, one can think of \emph{convolutional} layers (Example
  \ref{example:convolutional-layer}) as using weight-tying.
\end{exa}

A related technique is \emph{batching}.
So far, we have considered learning as updating a model with a single data
example $(x, y)$ at each timestep.
However, it is common for efficiency purposes to update the model using a
\emph{batch} of examples:
a finite number $n$ of examples $(x_i, y_i)$ for $i \in \{ 0 \ldots n \}$.

\begin{exa}[Batching]
  Suppose we have a model $(P, f : P \times A \to B)$.
  The batched model with \textbf{batch size} $n$ is a parametric map
  $(P, f' : P \times A^n : B^n)$ where $f'$ consists of $n$ copies of $f$
  applied to each input, but with the same parameters.
  For example, when $n = 2$, the batch update is as follows.
  \[ \scalebox{0.8}{\tikzfig{technique-batching}} \]
\end{exa}

The above diagrams highlight the relationship between weight-tying and batching.
However, note that these techniques serve different purposes: while weight-tying
can be used to reduce the number of weights in a model, batching is used for
efficiency reasons to update a model with multiple examples in parallel.

\subsection{Loss Maps as Parametric Lenses}\label{subsec:lossmaps}

Another key component of any learning algorithm is the choice of loss map.  This
gives a measurement of how far the current output of the model is from the
desired output.  In standard learning in $\Smooth$, this loss map is viewed as a
map of type $B \times B \to \Rb$.  However, in our setup, this is naturally
viewed as a parametric map from $B$ to $\Rb$ with parameter space $B$.\footnote{Here the loss map has its parameter space equal to its input space. However, putting loss maps on the same footing as models lends itself to further generalizations where the parameter space is different, and where the loss map can itself be learned. See Generative Adversarial Networks, \cite[Figure 7.]{TowardsCatCyber}.} We
also generalize the codomain to an arbitrary object $L$.

\begin{defi}
A \textbf{loss map on $B$} consists of a parametric map $(B, \mbox{loss}) :
\Para(\Ca)(B, L)$ for some object $L$.
\end{defi}

Note that we can precompose a loss map $(B, \mbox{loss}) \colon B \to L$ with a neural network $(P, f): A \to B$ (below left), and apply the functor in \eqref{eq:para_rdc} (with $\Ca = \Smooth$) to obtain the parametric lens below right.
\begin{equation}\label{eq:para-nn-loss}
  \begin{gathered}
   \scalebox{0.8}{\tikzfig{para-nn-loss}} 
   \quad
   \mapsto 
   \quad
   \scalebox{0.8}{\tikzfig{para-nn-losslens}}
  \end{gathered}
\end{equation}

This is getting closer to the parametric lens we want: it can now receive inputs of type
$B$.  However, this is at the cost of now needing an input to $L'$; we consider
how to handle this in the next section.

\begin{exa}[Quadratic error]\label{ex:l2_loss}
In $\Smooth$, the standard loss function on $\Rb^b$ is quadratic error: it uses $L = \Rb$ and has parametric map $e: \Rb^b  \times \Rb^b \to
\Rb$ given by 
\[
e(b_t,b_p) = \frac{1}{2} \sum_{i=1}^b ((b_p)_i - (b_t)_i)^2
  \]
where we think of $b_t$ as the ``true'' value and $b_p$ the predicted value.
This has reverse derivative $R[e]: \Rb^b \times \Rb^b \times \Rb \to \Rb^b
\times \Rb^b$ given by
$R[e](b_t,b_p,\alpha) = \alpha \cdot (b_p-b_t, b_t-b_p)$ --- note $\alpha$ suggests the idea of \textit{learning rate}, which we will explore in Section~\ref{subsec:learningrate}.
\end{exa}




\begin{exa}[Boolean error]\label{ex:boolean_error}
In $\PolyZ$, the loss function on $\Z^b$ which is implicitly used in \cite{rda} is a bit different: it uses $L = \Z^b$ and has parametric map $e: \Z^b \times \Z^b \to \Z^b$ given by
	\[ e(b_t,b_p) = b_t + b_p. \]
(Note that this is $+$ in $Z_2$; equivalently this is given by XOR.)   Its reverse derivative is of type $R[e]: \Z^b \times \Z^b \times \Z^b \to \Z^b \times \Z^b$ given by $R[e](b_t,b_p,\alpha) = (\alpha,\alpha)$.
\end{exa}

\begin{exa}[Softmax cross entropy]\label{ex:softmax_ce}
The Softmax cross entropy loss is a $\Rb^b$-parametric map $\Rb^b \to \Rb$
defined by 
\[
e(b_t, b_p) = \sum_{i=1}^{b}(b_t)_i((b_p)_i- \log(\Softmax(b_p)_i))
\]
where $\Softmax(b_p) = \frac{\exp((b_p)_i)}{\sum_{j=1}^{b}\exp((b_p)_j)}$ is defined componentwise for each class $i$.
\end{exa}

We note that, although $b_t$ needs to be a probability distribution, at the moment
there is no need to ponder the question of interaction of probability
distributions with the reverse derivative framework: one can simply consider $b_t$
as the image of some logits under the $\Softmax$ function.

\begin{exa}[Dot product]\label{ex:dot_product}
In Deep Dreaming (Section \ref{subsec:deep_dreaming}) we often want to focus
only on a particular element of the network output $\Rb^b$. This is done by
supplying a one-hot vector $b_t$ as the ground truth to the loss function $e(b_t, b_p) = b_t \cdot b_p$ which
computes the dot product of two vectors. If the ground truth vector $y$ is a one-hot vector (active at the $i$-th
element), then the dot product performs masking of all inputs except
the $i$-th one. Note the reverse derivative $R[e] \colon \Rb^b \times \Rb^b
\times \Rb \to \Rb^b \times \Rb^b$ of the dot product is defined as $R[e](b_t, b_p, \alpha) = (\alpha \cdot b_p, \alpha \cdot b_t)$.
\end{exa}

\subsection{Learning Rates as Parametric Lenses}\label{subsec:learningrate}


After models and loss maps, another ingredient of the learning process are \emph{learning rates}, which we formalise as follows. 

\begin{defi}
A \textbf{learning rate} $\alpha$ on $L$ consists of a lens from $\diset{L}{L'}$ to $\diset{1}{1}$ where $1$ is a terminal object in $\Ca$.
\end{defi}
Note that the get component of the learning rate lens must be the unique map to $1$, while the put component is a map $L \times 1 \to L'$; that is, simply a map $\alpha^*: L \to L'$.   Thus we can view $\alpha$ as a parametric lens from $\diset{L}{L'} \to \diset{1}{1}$ (with trivial parameter space) and compose it in $\Para(\Lens(\Ca))$ with a model and a loss map (\emph{cf.} \eqref{eq:para-nn-loss}) to get
\begin{equation}
   \scalebox{0.8}{\tikzfig{paralens-model-loss-cap}}
  \label{fig:paralens-model-loss-cap}
\end{equation}
\begin{exa}\label{ex:smooth_learning_rate}
In standard supervised learning in $\Smooth$, one fixes some $\epsilon > 0$ as a learning rate, and this is used to define $\alpha$: $\alpha$ is simply constantly $-\epsilon$, ie., $\alpha(l) = -\epsilon$ for any $l \in L$.
\end{exa}

\begin{exa}\label{ex:boolean_learning_rate}
In supervised learning in $\PolyZ$, the standard learning rate is quite different: for a given $L$ it is defined as the identity function, $\alpha(l) = l$.
\end{exa}

Other learning rate morphisms are possible as well: for example, one could fix some $\epsilon > 0$ and define a learning rate in $\Smooth$ by $\alpha(l) = -\epsilon \cdot l$.
Such a choice would take into account how far away the network is from
its desired goal and adjust the learning rate accordingly.

\subsection{Optimisers as Reparameterisations}\label{subsec:optimisers}

In this section we consider how to implement gradient descent, ascent, and other gradient updates into our framework. To this aim, note that the parametric lens $\diset{f}{R[f]}$ representing our model (see \eqref{eq:para_rdc}) outputs a $P'$, which represents a \emph{change} in the parameter space.  Now, we would like to receive not just the requested change in the parameter, but the new parameter itself. This is precisely what gradient update accomplishes, when formalised as a lens.  
We start by describing gradient ascent and gradient descent.


  \begin{defi}[Gradient ascent]
    \label{def:gradient_ascent}
    Let $\Ca$ be a CRDC.
    Gradient ascent on $P : \Ca$ is a lens
    \[
      \diset{\id_P}{+_P} : \diset{P}{P} \to \diset{P}{P'}
    \] 
    where $+_P : P \times P' \to P$ is the monoid structure of $P$.\footnote{Note that as in the discussion in Section \ref{sec:crdc}, we are implicitly assuming that $P = P'$; we have merely notated them differently to emphasize the different ``roles'' they play (the first $P$ can be thought of as ``points'', the second as ``vectors'').}
  \end{defi}

\begin{figure}[h]
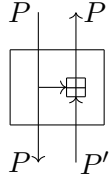

  \scaletikzfig[1]{gradient_ascent}
  \caption{Gradient Ascent}
\end{figure}

Intuitively, such a lens allows one to receive the requested change in parameter and implement that change by adding that value to the current parameter. By its type, we can now ``plug'' the gradient descent lens $G \colon \diset{P}{P} \to \diset{P}{P'}$ above the model $\diset{f}{R[f]}$ in \eqref{eq:para_rdc} --- formally, this is accomplished as a \emph{reparameterisation} of the parametric morphism $\diset{f}{R[f]}$, \emph{cf.} Section~\ref{sec:para}. This gives us Figure \ref{fig:model_and_optimiser} (left).
 \begin{figure}[h]
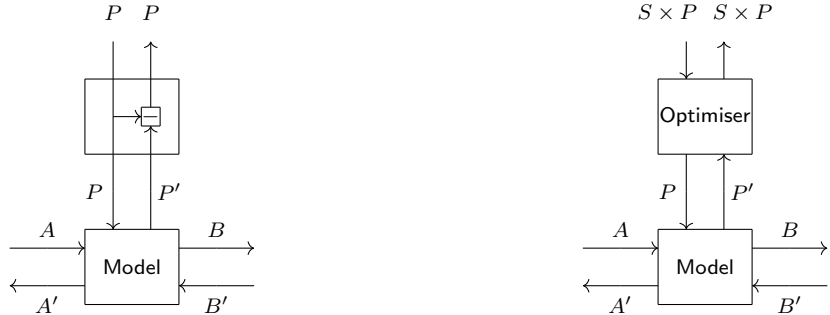

\centering
\begin{subfigure}{.5\textwidth}
   \centering
  {\scalefont{0.75}
    \tikzfig{model_and_optimiser}
  }
\end{subfigure}%
\begin{subfigure}{.5\textwidth}
  \centering
  {\scalefont{0.75}
    \tikzfig{model_and_stateful_optimiser}
  }

\end{subfigure}
\caption{Model reparameterised by basic gradient descent (left) and a
  generic stateful optimiser (right).}
\label{fig:model_and_optimiser}
\end{figure}

\begin{exa}[Gradient ascent in Smooth]\label{ex:grad_desc_lens}
In $\Smooth$, the gradient ascent reparameterisation will take the output from $P'$ and add it to the current value of $P$ to get a new value of~$P$.
\end{exa}

\begin{exa}[Gradient ascent in Boolean circuits]\label{ex:boolean_circuit_descent}
In the CRDC $\PolyZ$, the gradient ascent reparameterisation will again take the output from $P'$ and add it to the current value of $P$ to get a new value of $P$; however, since $+$ in $\mathbb{Z}_2$ is the same as XOR, this can be also be seen as taking the XOR of the current parameter and the requested change; this is exactly how this algorithm is implemented in \cite{rda}.
\end{exa}


  \begin{defi}[Gradient descent]
    \label{def:gradient_descent}
    Let $\Ca$ be a CRDC where every monoid is additionally a commutative group.\footnote{Since a homomorphism between groups needs to satisfy \emph{less} equations than a monoid homomorphism, this means that every monoid homomorphism is also a group homomorphism. This in turn means there are no extra conditions we need to impose on such a CRDC equipped with group objects.}
    Gradient descent on $P$ is a lens
    \[
      \diset{\id_P}{-_P} : \diset{P}{P} \to \diset{P}{P'}
    \]
    where $-_P:(p, p') = p - p'$.
  \end{defi}

\begin{figure}[h]
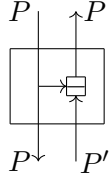

  \scaletikzfig[1]{gradient_descent}
  \caption{Gradient Descent}
\end{figure}

In $\Smooth$ this instantiates to usual gradient descent.
Gradient ascent in $\PolyZ$ is equal to gradient descent because $\XOR$ is its own inverse.
Intuitively in $\PolyZ$ there is always only one direction we can move (other than staying still): it's flipping the bit.
Gradient descent and ascent are not usually seen as a lens --- but they fit precisely into this picture that we are creating.

Other variants of gradient descent also fit naturally into this
framework by allowing for additional input/output data with $P$.    In
particular, many of them keep track of the
history of previous updates and use that to inform the next one.  This is easy to model in our setup: instead of asking for a lens $\diset{P}{P} \to \diset{P}{P'}$, we ask instead for a lens $\diset{S \times P}{S \times P} \to \diset{P}{P'}$ where $S$ is some ``state'' object.

\begin{defi}
A \textbf{stateful parameter update} consists of a choice of object $S$ (the \textbf{state} object) and a lens $U: \diset{S \times P}{S \times P} \to \diset{P}{P'}$.
\end{defi}

Again, we view this optimiser as a reparameterisation which may be ``plugged in'' a model as in Figure
~\ref{fig:model_and_optimiser} (right). Let us now consider how several well-known optimisers can be implemented in this way.

\begin{exa}[Momentum]
In the momentum variant of gradient descent, one keeps track of the previous
change and uses this to inform how the current parameter should be changed.
Thus, in this case, we set $S = P$, fix some $\gamma > 0$, and define the
\textbf{momentum} lens $\diset{U}{U^*}: \diset{P \times P}{P \times P} \to \diset{P}{P'}$.
by $U(s,p) = p$ and $ U^*(s,p,p') = (s', p + s')$, where $ s' = -\gamma s + p'$. Note momentum recovers
  gradient descent when $\gamma = 0$.
\end{exa}

In both standard gradient descent and momentum, our lens representation has trivial get part.  However, as soon as we move to more complicated variants, this is not anymore the case, as for instance in Nesterov momentum below.

\begin{exa}[Nesterov momentum]\label{ex:nesterov_momentum}
  In Nesterov momentum, one uses the momentum from previous updates to tweak the input parameter supplied to the network. We can precisely capture this by using a small variation of the lens in the previous example.  Again, we set $S = P$, fix some $\gamma > 0$, and define the \textbf{Nesterov momentum} lens $\diset{U}{U^*}: \diset{P \times P}{P \times P} \to \diset{P}{P'}$
by $U(s,p) = p + \gamma s$ and $U^*$ as in the previous example.
\end{exa}

\begin{exa}[Adagrad]\label{ex:adagrad}
Given any fixed $\epsilon > 0$ and $\delta \sim 10^{-7}$, Adagrad \cite{Adagrad}
is given by $S = P$, with the lens whose \gett{} part is $(g, p)
\mapsto p$. The \putt{} is $(g, p, p') \mapsto (g', p + \frac{\epsilon}{\delta + \sqrt{g'}} \odot p')$
where $g' = g + p' \odot p'$ and $\odot$ is the elementwise (Hadamard) product. Unlike with other optimization algorithms where the learning rate is the same
for all parameters, Adagrad divides the learning rate of each individual
parameter with the square root of the past accumulated gradients.
\end{exa}

\begin{exa}[Adam]\label{ex:adam}
Adaptive Moment Estimation (Adam) \cite{Adam}  is another method that computes adaptive
learning rates for each parameter by storing exponentially decaying average of past
gradients ($m$) and past squared gradients ($v$). For fixed $\beta_1, \beta_2 \in [0, 1)$, $\epsilon > 0$, and $\delta \sim 10^{-8}$, Adam is given by $S = P \times P$, with the lens whose \gett{} part is $(m, v, p) \mapsto p$ and whose \putt{} part is
$\putt{}(m, v, p, p') = (\widehat{m}', \widehat{v}', p + \frac{\epsilon}{\delta + \sqrt{\widehat{v}'}} \odot \widehat{m}')$ 
where $m' = \beta_1m + (1 - \beta_1)p'$, $v' = \beta_2v + (1 - \beta_2)p'^2$, and
	$\widehat{m}' = \frac{m'}{1 - \beta_1^t}, \widehat{v}' = \frac{v'}{1 - \beta_2^t}$.

\end{exa}

Note that, so far, optimsers/reparameterisations have been added to the $P/P'$ wires. In order to change the model's parameters (Fig.~\ref{fig:model_and_optimiser}).  In Section \ref{subsec:deep_dreaming} we will study them on the $A/A'$ wires instead, giving \emph{deep dreaming}.

\subsubsection{Can we compose optimisers?}
\label{subsubsec:can_we_compose_optimisers}

Even though not explicitly acknowledged in the literature, optimisers can be composed, and this composition plays an important role in settings where deep learning intersects with multivariable optimisation.
In such settings we're interested in their \emph{parallel} composition, therefore giving a positive answer to the above question.\footnote{One might wonder whether optimisers can be composed in sequence as well. The apparent sequential composability of optimisers is unfortunately an artefact of our limited view without dependent types.}
Parallel composition of optimisers arises out of the fact that optimisers are lenses, and lenses are a monoidal category (Remark~\ref{remark:lens_monoidal}).
In such settings we might have an optimiser of two variables which descends in one direction, and ascends in the other one, for instance. 

\begin{defi}[Gradient descent-ascent (GDA)]
  \label{ex:gradient_descent_ascent}
  Given objects $P$ and $Q$, gradient descent-ascent on $P \times Q$ is a lens
  \[
    \diset{P \times Q}{\gda} : \diset{P \times Q}{P \times Q} \to \diset{P \times Q}{P' \times Q'}
  \]
  where $\gda(p, q, p', q') = (p - p', q + q')$.
\end{defi}

In $\Smooth$ this gives an optimiser which descends on $P$ and ascents on $Q$.
In $\PolyZ$ this map ends up computing the same update function on both parameter spaces: the one that just flips the underlying bit.
This is something that ends up preventing us from modelling GANs in this setting (compare with Ex. \ref{ex:gan_gda_dotproduct} where both positive and negative polarity of the optimiser map is needed).

When it comes to optimisers of two parameters, gradient descent-ascent is a particular type of an optimiser that is a product of two optimisers.
But not all optimisers can be factored in such a way, much like a general monoidal product doesn't necessarily have to be cartesian.
A good example of this is an optimiser on two parameters called \emph{competitive gradient descent} (\cite{schafer_competitive_2020}).
We don't explicitly define or use it in this paper, instead inviting the reader to the aforementioned reference for more information.



\section{Learning with Parametric Lenses}
\label{section:learning-with-lenses}

In the previous section we have seen how all the components of learning can be
modeled as parametric lenses. We now study how all these components can be put together to form learning systems. We cover the most common 
examples of supervised learning, also discussing different kinds of layers, architectures, and techniques such as weight tying and batching. We also consider unsupervised learning, in the form of Generative Adversarial Networks. Finally, in addition to systems that learn \textit{parameters}, we study systems that learn their \textit{inputs}. This is a
technique commonly known as deep dreaming, and we present it as a natural
counterpart of supervised learning of parameters.

Before we describe these systems, it will be convenient to represent all the
inputs and outputs of our parametric lenses as parameters. In \eqref{fig:paralens-model-loss-cap}, we see the $P/P'$ and $B/B'$ inputs and outputs as parameters; however, the
$A/A'$ wires are not. To view the $A/A'$ inputs as parameters, we compose that
system with the parametric lens $\eta$ we now define. The parametric lens
$\eta$ has the type $\diset{1}{1} \to \diset{A}{A'}$ with parameter space $\diset{A}{A'}$ defined by
$(\gett{}_{\eta} = 1_A, \putt{}_{\eta} = \pi_1)$ and can be depicted graphically as $\scalebox{0.7}{ \tikzfig{para_lens_state}}$.
Composing $\eta$ with the rest of the learning system in 
~\eqref{fig:paralens-model-loss-cap} gives us the closed parametric lens

\begin{equation}
    \scalebox{0.8}{ \tikzfig{learner_with_input}}
  \label{fig:supervised}
\end{equation}
This composite is now a map in $\Para(\Lens(\Ca))$ from $\diset{1}{1}$ to $\diset{1}{1}$; all
its inputs and outputs are now vertical wires, ie., parameters. Unpacking it
further, this is a lens of type $\diset{A \times P \times B}{A' \times P' \times B'}
\to \diset{1}{1}$ whose $\gett{}$ map is the terminal map, and whose $\putt{}$ map is
of the type $A \times P \times B \to A' \times P' \times B'$. It can be unpacked
as the composite 
\begin{align*}
  \putt{}(a, p, b_t) = (a', p', b_t') \qquad \texttt{ where } \quad \qquad b_p &= f(p, a) \\
  (b'_t, b'_p) &= R[\loss](b_t, b_p, \alpha(\loss(b_t, b_p))) \\
  (p', a') &= R[f](p, a, b'_p)
\end{align*}
In the next two sections we consider further additions to the image above which
correspond to different types of supervised learning.

\subsection{Supervised Learning of Parameters}\label{subsec:learning-parameters}


The most common type of learning performed on \eqref{fig:supervised} is supervised learning of \textit{parameters}.
This is done by reparameterising (\emph{cf.} Section \ref{sec:para}) the image 
in the following manner. The parameter ports are reparameterised by one of the
(possibly stateful) optimisers described in the previous section, while the backward wires $A'$ of inputs and $B'$ of outputs are discarded.
This finally yields the complete picture of a system which learns the
parameters in a supervised~manner:

\begin{equation*}
  {\scalefont{0.75}
    \tikzfig{paralens-full-learner-opaque-state}
  }
\end{equation*}

Fixing a particular optimiser $\diset{U}{U^*} : \diset{S \times P}{S \times P} \to \diset{P}{P'}$
we again unpack the entire construction. This is a map in $\Para(\Lens(\Ca))$
from $\diset{1}{1}$ to $\diset{1}{1}$ whose parameter space is $\diset{A \times S \times P \times
B}{S \times P}$. In other words, this is a lens of type $\diset{A \times S \times P
\times B}{S \times P}  \to \diset{1}{1}$ whose $\gett{}$ component is the terminal map. Its $\putt{}$ map has the type $A \times S \times P \times B \to S \times P$ and unpacks to $\putt{}(a, s, p, b_t) = U^*(s, p, p')$, where
\begin{align*}
  \putt{}(a, s, p, b_t) = U^*(s, p, p') \qquad \texttt{ where } \quad \qquad \overline{p}  &= U(s, p) \\ b_p &= f(\overline{p}, a) \\
  (b'_t, b'_p) &= R[\loss](b_t, b_p, \alpha(\loss(b_t, b_p))) \\
  (p', a') &= R[f](\overline{p}, a, b'_p)
\end{align*}

While this formulation might seem daunting, we note that it just explicitly
specifies the computation performed by a supervised learning system. The
variable $\overline{p}$ represents the parameter supplied to the network by the
stateful gradient update rule (in many cases this is equal to $p$); $b_p$
represents the prediction of the network (contrast this with $b_t$ which
represents the ground truth from the dataset).  Variables with a tick $'$
represent changes: $ b'_p$ and $b'_t$ are the changes on predictions and true
values respectively, while $p'$ and $a'$ are changes on the parameters and
inputs.
Furthermore, this arises automatically out of the rule for lens composition~\eqref{eq:lens-composition}; what we needed to specify is just the lenses themselves.

We justify and illustrate our approach on a series of case studies drawn from the literature. This presentation has the advantage of treating all these instances uniformly in terms of basic constructs, highlighting their similarities and differences. First, we fix some parametric map $(\Rb^p, f) :
\Para(\Smooth)(\Rb^a, \Rb^b)$ in $\Smooth$ and the constant \textit{negative} learning rate $\alpha : \Rb$ (Example ~\ref{ex:smooth_learning_rate}). We then vary the loss
function and the gradient update, seeing how the $\putt{}$ map above reduces to
many of the known cases in the literature.

\begin{exa}[Quadratic error, basic gradient descent]\label{ex:quad_err_grad_desc}
Fix the quadratic error (Example \ref{ex:l2_loss}) as the loss map and basic
gradient update (Example \ref{ex:grad_desc_lens}).
Then the aforementioned $\putt{}$ map simplifies.
Since there is no state, its type reduces to $A \times P \times B
\to P$, and we have $\putt{}(a, p, b_t) = p + p'$, where $(p',
a') = R[f](p, a, \alpha \cdot (f(p, a) - b_t))$.

Note that $\alpha$ here is simply a constant, and due to the linearity of the
reverse derivative (Def ~\ref{defn:CRDC}), we can slide the $\alpha$ from the costate
into the basic gradient update lens. Rewriting this update, and performing this
sliding we obtain a closed form update step 
\[
\putt{}(a, p, b_t) = p + \alpha \cdot (R[f](p, a, f(p, a) - b_t);\pi_0)
\]
where the negative \textit{descent} component of gradient descent is here contained in the choice of the negative constant $\alpha$.
\end{exa}

This example gives us a variety of \textit{regression} algorithms solved
iteratively by gradient descent: it embeds some parametric map $(\R^p, f) \colon \R^a \to \R^b$ into the system which performs regression on input
data - where $a$ denotes the input to the model and $b_t$ denotes the ground truth. If the corresponding $f$ is linear and $b = 1$, we recover simple linear
regression with gradient descent. If the codomain is 
multi-dimensional, i.e. we are predicting multiple scalars, then we recover
multivariate linear regression. Likewise, we can model a
multi-layer perceptron or even more complex neural network architectures
performing supervised learning of parameters simply
by changing the underlying parametric map.

\begin{exa}[Softmax cross entropy, basic gradient descent]
Fix Softmax cross entropy (Example \ref{ex:softmax_ce}) as the loss map and
basic gradient update (Example \ref{ex:grad_desc_lens}). 
Again the $\putt{}$ map simplifies. 
The type reduces to $A \times P \times B \to P$ and we have 
\[
\putt{}(a, p, b_t) = p + p'
\]
where $(p', a') = R[f](\overline{p}, a, \alpha \cdot (\Softmax(f(p, a)) - b_t))$.
The same rewriting performed on the previous example can be done here.
\end{exa}

This example recovers \textit{logistic regression}, e.g. classification.

\begin{exa}[Mean squared error, Nesterov Momentum]
Fix the quadratic error (Example \ref{ex:l2_loss}) as the loss map and Nesterov
momentum (Example \ref{ex:nesterov_momentum}) as the gradient update. This time the
$\putt{}$ map $A \times S \times P
\times B \to S \times P$ does not have a simplified type. The implementation of $\putt{}$ reduces to 
\begin{align*}
 \putt{}(a, s, p, b_t) = (s', p + s') \qquad \texttt{ where } \quad \qquad  \overline{p}  &= p + \gamma s \\
  (p', a') &= R[f](\overline{p}, a, \alpha \cdot (f(\overline{p}, a) - b_t)) \\
  s' &= - \gamma s + p' 
\end{align*}
\end{exa}

This example with Nesterov momentum differs in two key points from all the other
ones: i) the optimiser is stateful, and ii) its $\gett{}$ map is not trivial.
While many other optimisers are stateful, the non-triviality of the $\gett{}$
map here showcases the importance of lenses. They allow us to make precise the notion
of computing a ``lookahead'' value for Nesterov momentum, something that is in
practice usually handled in ad-hoc ways. Here, the algebra of lens composition handles this case naturally by
using the $\gett{}$ map, a seemingly trivial, unused piece of data for previous optimisers.

Our last example, using a different base category $\PolyZ$, shows that our framework captures learning in not just continuous, but discrete settings too.
Again, we fix a parametric map $(\Zb^p, f) : \PolyZ(\Zb^a, \Zb^b)$ but this
time we fix the identity learning rate (Example \ref{ex:boolean_learning_rate}),
instead of a constant one.

\begin{exa}[Basic learning in Boolean circuits]
Fix XOR as the loss map (Example \ref{ex:boolean_error}) and the basic gradient update (Example \ref{ex:boolean_circuit_descent}). The put map again simplifies. The type reduces to $A \times P \times B \to P$ and the implementation to $\putt{}(a, p, b_t) = p + p'$ where $(p', a') = R[f](p, a, f(p, a) + b_t)$.
\end{exa}

\textbf{A sketch of learning iteration.}
Having described a number of examples in supervised learning, we outline how to
model learning iteration in our framework.
Recall the aforementioned  \putt{} map whose type is $A \times P
\times B \to P$ (for simplicity here modelled without state $S$). This map takes an
input-output pair $(a_0, b_0)$, the current parameter $p_i$ and produces an updated
parameter $p_{i + 1}$. At the next time step, it takes a potentially different
input-output pair $(a_1, b_1)$, the updated parameter $p_{i + 1}$ and produces
$p_{i + 2}$. This process is then repeated. We can model this iteration as a composition of the $\putt{}$ map with itself, as a composite $(A \times \putt{} \times B);\putt{}$
whose type is $A \times A \times P \times B \times B \to P$. This map takes two
input-output pairs $A \times B$, a parameter and produces a new parameter by
processing these datapoints in sequence. One can see how
this process can be iterated any number of times, and even represented as a
string diagram.

But we note that with a slight reformulation of the \putt{} map, it is possible to obtain a conceptually much simpler definition. The key insight lies in seeing that the
map $\putt{} : A \times P \times B \to P$ is essentially an endo-map $P \to P$ with some extra inputs $A \times B$; it's a parametric map!

In other words, we can recast the $\putt{}$ map as a parametric map $(A
\times B, \putt{}) : \Para(\Ca)(P, P)$. Being an endo-map, it can be
composed with itself. The resulting composite is an endo-map taking two ``parameters'': input-output pair at the time step $0$ and time step
$1$. This process can then be repeated, with $\Para$ composition automatically taking care of the algebra of iteration.

\begin{equation*}
  \scalebox{0.8}{\tikzfig{para-iteration}}
\end{equation*}

This reformulation captures the essence of parameter iteration: one can think of it as a
trajectory $p_i, p_{i + 1}, p_{i + 2}, ...$ through the parameter space; but
it is a \textit{trajectory parameterised by the dataset}. With different datasets the
algorithm will take a different path through this space and learn different things.

\subsection{Unsupervised Learning}
\label{subsec:unsupervised-learning}

Many kinds of systems that are traditionally considered unsupervised can be recast to their supervised form.
One example is a Generative Adversarial Network (\cite{GAN, arjovsky_wasserstein_2017}).
This is a neural network architecture that lies in the centre of the intersection of deep learning and game theory.
It is a system of two neural networks trained with ``competing'' optimisers.
One neural network is called \emph{the generator} whose optimiser is, as usual, tasked with moving in the direction of the negative gradient of the loss.
However, the other network --- called \emph{the discriminator} --- has an optimiser which is tasked with moving in the \emph{positive}, i.e.\ ascending direction of the gradient of the total loss --- maximising the loss.
The actual networks are wired in such a way (Fig.\ \ref{fig:gan_box}) where the discriminator effectively serves as a loss function to the generator, i.e.\ being the generator's only source of information on how to update.
Dually, taking the vantage point of the discriminator, the generator serves as an ever changing source of training data.

\begin{defi}[GAN]
  \label{def:gan}
  Fix three objects $Z, X$ and $L$ in $\Ca$ (respectively called ``the latent space'', ``the data space'' and ``the payoff space'').
  Then given two parametric morphisms
  \[
    (P, g) : \Para(\Ca)(Z, X) \quad \text{and} \quad (Q, d) : \Para(\Ca)(X, L)
  \]
  a \textbf{generative adversarial network} is a morphism $(P \times Q, \GAN_{g, d}) : \Para(\Ca)(Z \times X, L \times L)$ where $\GAN_{g, d}$ is defined as the composite
  \[
    Z \times X \times P \times Q \cong Z \times P \times X \times Q \xrightarrow{g \times X \times \Delta_Q} X \times X \times Q \times Q \cong X \times Q \times X \times Q \xrightarrow{d \times d} L \times L
  \]
\end{defi}

\begin{figure}[h]
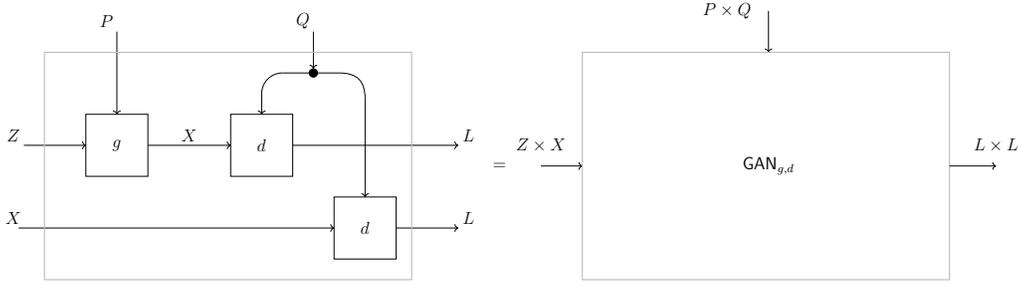

  \scaletikzfig[0.55][1.1]{gan_box}
  \caption{A generative adversarial network as a parametric morphism.}
  \label{fig:gan_box}
\end{figure}

Its string diagram representation is shown in Fig. \ref{fig:gan_box} where we see that a GAN consists of two parallel tracks.
We will see in Ex. \ref{ex:gan_gda_dotproduct} how the first one will be used to process latent vectors, and the second one to process samples from a chosen dataset.
Despite the fact that there are two boxes labeled $d$, they are weight tied, making them behave like a singular unit.

We can easily state what the reverse derivative of $\GAN_{g, d}$ is in terms of its components:
\begin{align}
  \label{ex:gan_reverse_derivative}
  \begin{split}
    R[\GAN_{g, d}](z, x_r, p, q, \alpha_g, \alpha_r) = (z',x_r',p',q_g' + q_r') \quad \text{where} \quad 
      (x_g', q_g') &= R[d](g(z, p), q, \alpha_g)\\
      (x_r', q_r') &= R[d](x_r, q, \alpha_r)\\
      (z', p') &= R[g](z, p, x_g')
    \end{split}
\end{align}

The pair $(\GAN_{g, d}, R[\GAN_{g, d}])$ yields a parametric lens of type $(Z \times X)(Z' \times X') \to (L \times L)(L' \times L')$ (Fig.\ \ref{fig:gan_differentiated}), which we interpret as follows.

It consumes two pieces of data, ``a latent vector'' $z : Z$, a ``real'' sample from the dataset $x_r : X$, in addition to the parameter $p : P$ for the generator and a parameter $q : Q$ for the discriminator.
What happens then are two independent evaluations done by the discriminator.
The first one uses the generator's attempt of producing a sample from the dataset (the latent vector which was fed into it, producing $g(z, p) : X$) as input to the discriminator, producing a payoff $d((g, z, p), q) : L$ for this particular sample.
The second one uses the actual sample from the dataset $x_r$, producing the payoff $d(x_r, q) : L$.

By choosing $\GAN_{g, d}$ as the parametric map representing our supervised learning model, we can differentiate it as in (Fig. \ref{fig:gan_differentiated}), and, with the appropriate choice of a loss function, produce the learning system in the literature called \emph{Wasserstein} GAN (\cite{arjovsky_wasserstein_2017}).

\begin{figure}[h]
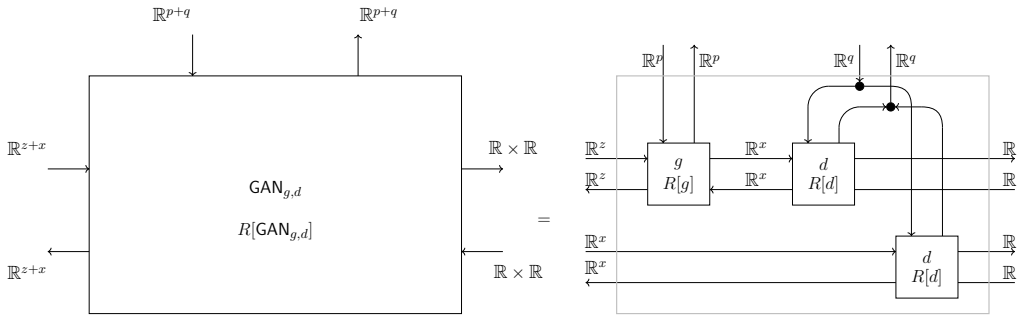

  \scaletikzfig[0.55][1.2]{gan_differentiated}
  \caption{A generative adversarial network under the image of $\Para(\RC)$.}
  \label{fig:gan_differentiated}
\end{figure}

\begin{exa}[GANs, Dot product, GDA]
  \label{ex:gan_gda_dotproduct}
  Fix $\GAN_{g, d}$ as the parametric map (Def. \ref{def:gan}), gradient descent-ascent (Ex. \ref{ex:gradient_descent_ascent}) as the optimiser and dot product (Ex. \ref{ex:dot_product}) as the loss function.
  Then $\putt{}$ becomes a map of type $Z \times X \times P \times Q \times L \times L \to P \times Q$ and its implementation reduces to
  \begin{align*}
    \putt{}(z, x, p, q, \groundtruth) = (p - p', q + q') \qquad \text{where} \quad \qquad (z', x', p', q') &= R[\GAN_{g, d}](z, x, p, q, \alpha \cdot \groundtruth)
  \end{align*}
  We can further unpack the label 
  \begin{align*}
    \putt{}(z, x_r, p, q, {\groundtruth}_g, {\groundtruth}_r) = (p - p', q + q_g' + q_r') \qquad \text{where} \quad \qquad (x_g', q_g') &= R[d](g(z, p), q, \alpha \cdot {\groundtruth}_g)\\
       (z', p') &= R[g](z, p, x_g')\\
       (x_r', q_r') &= R[d](x_r, q, \alpha \cdot {\groundtruth}_r)
  \end{align*}
  This brings us to the last step, where by linearity of the backward pass we can extract $\alpha$ and components of $\groundtruth$ out:
  \begin{align*}
    \putt{}(z, x_r, p, q, {\groundtruth}_g, {\groundtruth}_r) = (p - \alpha {\groundtruth}_g p', q + \alpha({\groundtruth}_g q_g' + {\groundtruth}_r q_r')) \qquad \text{where} \quad (x_g', q_g') &= R[d](g(z, p), q, 1)\\
    (z', p') &= R[g](z, p, x_g')\\
    (x_r', q_r') &= R[d](x_r, q, 1)
  \end{align*}
\end{exa}

The ultimate representation is a form in which it makes it possible to see how the update recovers that of Wasserstein GANs.
The last missing piece is to note that the supervision labels $y_t$ here are effectively ``masks''.
Just like in standard supervised learning an input-output pair $(x_i, y_i)$ consisted of an input value and a corresponding label which guided the direction in which the output $f(x_i, p)$ should've been improved, here the situation is the same.
Given any latent vector $z$ its corresponding ``label'' is the learning signal ${\groundtruth}_g = 1$ which does not change anything in the update, effectively signaling to the generator's and discriminator's optimisers that they should descend (minimizing the assigned loss, making the image more realistic next time), and respectively ascend (maximizing the loss, becoming better at detecting when its input  is a sample generated by the generator).
On the other hand, given any real sample $x_r$ its corresponding ``label'' is the learning signal ${\groundtruth}_r = -1$ which signals to the discriminator's optimiser that it should do the opposite of what it usually does; it should descend, causing it to assign a lower loss value actual samples from the dataset.
In other words, the input-output pairs here are always of the form $((z, x)_i, (1, -1)_i)$, making this GAN in many ways a \emph{constantly} supervised model.
Nonetheless, these different ``forces'' that pull the discriminator in different directions depending on the source of the input, coupled with the ever-changing generated inputs make GANs have intrinsically complex dynamics that are still being studied.

  The fact that we were able to encode Wasserstein GAN in this form in our framework is a consequence of its simple formulation of its loss function, which is effectively given by subtraction \cite[Theorem 3]{arjovsky_wasserstein_2017}.

\subsection{Deep Dreaming: Supervised Learning of Inputs}\label{subsec:deep_dreaming}

We have seen that reparameterising the parameter port with gradient descent allows us to capture supervised parameter learning.
In this section we describe how reparameterising the \textit{input
port} provides us with a way to enhance an input image to elicit a particular
interpretation. This is the idea behind the technique called Deep Dreaming,
appearing in the literature in many forms \cite{DeepDreaming1, DeepDreaming2,
  DeepDreaming3, DeepDreaming4}.

\begin{equation}
  {\scalefont{0.75}
    \tikzfig{deep_dreaming}
  }  \label{eq:deep_dreaming}
\end{equation}

Deep dreaming is a technique which uses the parameters $p$ of some trained
classifier network to iteratively dream up, or amplify some features of a class $b$ on a chosen input $a$.
For example, if we start with an image of a landscape $a_0$, a label $b$ of a ``cat''
and a parameter $p$ of a sufficiently well-trained classifier, we can start performing
``learning'' as usual: computing the predicted class for the landscape $a_0$
for the network with parameters $p$, and then computing the distance between the
prediction and our label of a cat $b$. When performing backpropagation, the
respective changes computed for each layer tell us how the activations of that
layer should have been changed to be more ``cat'' like. This includes the first
(input) layer of the landscape $a_0$. Usually, we discard this changes and
apply gradient update to the parameters.
In deep dreaming we \textit{discard the parameters} and \textit{apply gradient update to the input} (see~\eqref{eq:deep_dreaming}).
Gradient update here takes these changes and computes a new image $a_1$ which
is the same image of the landscape, but changed slightly so to look more like
whatever the network thinks a cat looks like.
This is the essence of deep dreaming, where iteration of this process allows
networks to dream up features and shapes on a particular chosen image
\cite{GoogleInceptionism}.

Just like in the previous subsection, we can write this deep dreaming system as
a map in $\Para(\Lens(\Ca))$ from $\diset{1}{1}$ to $\diset{1}{1}$ whose parameter space is
$\diset{S \times A \times P \times B}{S \times A}$. In other words, this is a lens of
type $\diset{S \times A \times P \times B}{S \times A} \to \diset{1}{1}$ whose $\gett{}$ map is trivial. Its $\putt{}$ map has the type $S \times A \times P \times B \to S \times A$ and unpacks to 
\begin{align*}
  \putt{}(s, a, p, b_t) = U^*(s, a, a') \qquad \texttt{ where} \quad \qquad \overline{a}  &= U(s, a) \\
  b_p &= f(p, \overline{a}) \\
  (b'_t, b'_p) &= R[\loss](b_t, b_p, \alpha(\loss(b_t, b_p))) \\
  (p', a') &= R[f](p, \overline{a}, b'_p)
\end{align*}

We note that deep dreaming is usually presented without any loss function as a
maximisation of a particular activation in the last layer of the network output
\cite[Section 2.]{DeepDreaming4}. This maximisation is done with gradient
ascent, as opposed to gradient descent.
However, this is just a special case of our framework where the loss function is
the dot product (Example \ref{ex:dot_product}). The choice of the particular
activation is encoded as a one-hot vector, and the loss function in that case essentially masks the network output, leaving active only the particular chosen activation. The final component is the gradient \textit{ascent}: this is simply recovered by choosing
a positive, instead of a negative learning rate \cite{DeepDreaming4}. We
explicitly unpack this in the following example.

\begin{exa}[Deep dreaming, dot product loss, basic gradient update] Fix $\Smooth$ as base category, a parametric map $(\Rb^p, f) :
\Para(\Smooth)(\Rb^a, \Rb^b)$, the dot product loss (Example \ref{ex:dot_product}), basic gradient
update (Example \ref{ex:grad_desc_lens}), and a \textit{positive} learning rate
$\alpha : \Rb$. Then the above put map simplifies.
Since there is no state, its type reduces to $A \times P \times B \to A$ and its
implementation to 
\begin{align*}
  \putt{}(a, p, b_t) = a + a' &\qquad \qquad \qquad \qquad \texttt{  where }
  (p', a') = R[f](p, a, \alpha \cdot b_t).
\end{align*}
Like in Example \ref{ex:quad_err_grad_desc}, this update can be rewritten as
\[
\putt{}(a, p, b_t) = a + \alpha \cdot (R[f](p, a, b_t);\pi_1)
\]
making a few things apparent. This update does not depend on the prediction
$f(p, a)$: no matter what the network has predicted, the goal is always to
maximise particular activations. Which activations? The ones chosen by $b_t$.
When $b_t$ is a one-hot vector, this picks out the activation of just one class
to maximise, which is often done in practice.
\end{exa}

While we present only the most basic image, there is plenty of room left for
exploration. The work of \cite[Section 2.]{DeepDreaming4} adds an extra
regularisation term to the image. In general, the neural network $f$ is sometimes changed to copy a number of internal activations which are then exposed on the output layer. Maximising all these activations often produces more visually appealing results.
In the literature we did not find an example which uses the Softmax-cross
entropy (Example \ref{ex:softmax_ce}) as a loss function in deep dreaming, which
seems like the more natural choice in this setting. Furthermore, while deep
dreaming commonly uses basic gradient descent, there is nothing
preventing the use of any of the optimiser lenses
discussed in the previous section, or even doing deep dreaming in the context of
Boolean circuits. Lastly, learning iteration which was described in at the end
of previous subsection can be modelled here in an analogous way.

\section{Implementation}
\label{section:implementation}

We provide a proof-of-concept implementation
as a Python library --- full usage examples, source code, and experiments can be  found at \cite{python-library}.
We demonstrate the correctness of our library empirically using a number of
experiments implemented both in our library and in Keras \cite{Keras}, a popular
framework for deep learning.
For example, one experiment is a model for the MNIST image classification
problem \cite{mnist}: we implement the same model in both frameworks and achieve
comparable accuracy.
Note that despite similarities between the user interfaces of our library and of
Keras, a model in our framework is constructed
as a composition of parametric lenses. 
This is fundamentally different to the approach taken by Keras and other
existing libraries,
and highlights how our proposed algebraic structures naturally 
guide programming practice

In summary, our implementation demonstrates the advantages of our approach.
Firstly, computing the gradients of the network is greatly simplified
through the use of lens composition.
Secondly, model architectures can be expressed in a principled, mathematical
language; as morphisms of a monoidal category.
Finally, the modularity of our approach makes it easy to see how various aspects
of training can be modified: for example, one can define a new optimization
algorithm simply by defining an appropriate lens. We now give a brief sketch of our implementation.

\subsection{Constructing a Model with Lens and Para}

We model a lens $\diset{f}{f^*}$ in our library with the \texttt{Lens} class, which
consists of a pair of maps \texttt{fwd} and \texttt{rev} corresponding to $f$
and $f^*$, respectively.
For example, we write the identity lens $\diset{1_A}{\pi_2}$ as follows:

\begin{lstlisting}[
  %caption=Identity Lens,
  label=lst:identity-lens,
]
identity = Lens(lambda x: x, lambda x_dy: x_dy[1])
\end{lstlisting}

The composition (in diagrammatic order) of \texttt{Lens} values \texttt{f} and
\texttt{g} is written \texttt{f >> g}, and monoidal composition as \texttt{f @
g}.
Similarly, the type of $\Para$ maps is modeled by the \texttt{Para} class, with
composition and monoidal product written the same way.
Our library provides several primitive \texttt{Lens} and \texttt{Para} values.

Let us now see how to construct a single layer neural network from the
composition of such primitives.
Diagrammatically, we will construct a model consisting of a single dense layer,
as in Example \ref{example:dense-layer} and below.

\[ \scalebox{0.8}{\tikzfig{paralens-dense-layer}} \]

Recall that the parameters of \texttt{linear} are the coefficients of a $b \times a$ matrix,
and the underlying lens has as its forward map the function $(M, x) \to M \cdot x$, where $M$ is the $b \times a$ matrix whose coefficients are the
$\Rb^{b \times a}$ parameters, and $x \in \Rb^a$ is the input vector.  The
\texttt{bias} map is even simpler: the forward map of the underlying lens is
simply pointwise addition of inputs and parameters: $(b, x) \to b + x$.
Finally, the \texttt{activation} map simply applies a nonlinear function (e.g., $\textsf{sigmoid}$) to the input,
and thus has the trivial (unit) parameter space.
The representation of this composition in code is straightforward: we can simply
compose the three primitive \texttt{Para} maps as in \eqref{eq:paralens-dense-layer}:

\begin{lstlisting}[
  %caption={The \texttt{dense} lens},
  label=lst:dense-lens,
]
def dense(a, b, activation):
  return linear(a, b) >> bias(b) >> activation
\end{lstlisting}

Note that by constructing model architectures in this way, the computation of
reverse derivatives is greatly simplified: we obtain the reverse
derivative `for free' as the $\putt$ map of the model.
Furthermore, adding new primitives is also simplified: the user need simply
provide a function and its reverse derivative in the form of a \texttt{Para}
map.
Finally, notice also that our approach is truly compositional: we can define a
hidden layer neural network with $n$ hidden units simply by composing two dense
layers, as follows:
\begin{lstlisting}[
  %caption={The \texttt{dense} lens},
  label=lst:dense-lenss,
]
dense(a, n, activation) >> dense(n, b, activation)
\end{lstlisting}

\subsection{Learning}

Now that we have constructed a model, we also need to use it to \emph{learn} from data.
Concretely, we will construct a full parametric lens as in
Figure ~\ref{fig:roadmap}
then extract its $\putt$ map to iterate over the dataset.

By way of example, let us see how to construct the following parametric lens,
representing basic gradient descent over a single layer neural network with a
fixed learning rate:

\begin{equation}\label{eq:paralens-implementation-full-learner}
   \scalebox{0.8}{\tikzfig{paralens-implementation-full-learner}}
\end{equation}

This morphism is constructed essentially as below, where
\texttt{apply\_update($\alpha$, $f$)} represents the `vertical stacking' of
$\alpha$ atop $f$:

\begin{lstlisting}[mathescape]
apply_update(basic_update, dense) >> loss >> learning_rate($\epsilon$)
\end{lstlisting}

Now, given the parametric lens of \eqref{eq:paralens-implementation-full-learner},
one can construct a morphism $\textsf{step} : B \times P \times A \to P$ which is simply the put map of the lens.
Training the model then consists of iterating the $\textsf{step}$ function over
dataset examples $(x, y) \in A \times B$ to optimise some initial choice of
parameters $\theta_0 \in P$, by letting
$\theta_{i+1} = \mathsf{step}(y_i, \theta_i, x_i)$.

Note that our library also provides a utility function to construct $\mathsf{step}$ from its various pieces:
\begin{lstlisting}[mathescape]
step = supervised_step(model, update, loss, learning_rate)
\end{lstlisting}

For an end-to-end example of model training and iteration, we refer the
interested reader to the experiments accompanying the code \cite{python-library}.


\section{Related Work}
\label{section:related-work}

The work \cite{BackpropAsFunctor} is closely related to ours, in that it
provides an abstract categorical model of backpropagation. However, it differs in
a number of key aspects.
We give a complete lens-theoretic explanation of {\em what} is
back-propagated via (i) the use of CRDCs to model gradients; and (ii) the $\Para$ construction to model parametric functions and
parameter update.
We thus can go well beyond \cite{BackpropAsFunctor} in terms of examples - their example of
smooth functions and basic gradient descent is covered in our subsection
\ref{subsec:learning-parameters}.

We also explain some of the constructions of \cite{BackpropAsFunctor} in a more
structured way.  For example, rather than considering the category $\Learn$ of
\cite{BackpropAsFunctor} as primitive, here we construct it as a composite of two
more basic constructions (the $\Para$ and $\Lens$ constructions).
The flexibility could be used, for example, to compositionally replace $\Para$
with a variant allowing parameters to come from a different category, or lenses
with the category of optics \cite{Optics} enabling us to model things such as control flow using prisms. 

One more relevant aspect is functoriality. We use a functor to augment a
parametric map with its backward pass, just like \cite{BackpropAsFunctor}.
However, they additionally augmented this map with a loss map and gradient
descent using a functor as well. This added extra conditions on the partial
derivatives of the loss function: it needed to be invertible in the 2nd
variable. This constraint was not justified in \cite{BackpropAsFunctor}, nor is
it a constraint that appears in machine learning
practice. This led us to reexamine their constructions, coming up with our
reformulation that does not require it.
While loss maps and optimisers are mentioned in \cite{BackpropAsFunctor} as parts of the aforementioned functor, here they are extracted out and play a key role: loss maps are parametric lenses and optimisers are reparameterisations.
Thus, in this paper we instead use $\Para$-composition
to add the loss map to the model, and $\Para$ 2-cells to add optimisers.
The mentioned inverse of the partial derivative of the loss map in the 2$^{nd}$ variable was also hypothesised to be relevant to deep dreaming. We have investigated this possibility thoroughly in our paper, showing it is gradient update which is used to dream
up pictures. We also correct a small issue in Theorem III.2 of \cite{BackpropAsFunctor}.
There, the morphisms of $\Learn$ were defined up to an equivalence (pg. 4 of
\cite{BackpropAsFunctor}) but, unfortunately, the functor defined in Theorem
III.2 does not respect this equivalence relation. Our approach instead uses
2-cells which comes from the universal property of $\Para$ ---  a 2-cell from
$(P,f): A \to B$ to $(Q,g): A \to B$ is a lens, and hence has two components: a
map $\alpha: Q \to P$ and $\alpha^*: Q \times P \to Q$.  By comparison, we can
see the equivalence relation of \cite{BackpropAsFunctor} as being induced by
map $\alpha: Q \to P$, and not a lens.  Our approach highlights the importance
of the 2-categorical structure of learners. In addition, it does not treat the
functor $\Para(\Ca) \to \Learn$ as a primitive. In our case, this functor has
the type $\Para(\Ca) \to \Para(\Lens(\Ca))$ and arises from applying $\Para$ to a
canonical functor $\Ca \to \Lens(\Ca)$ existing for \textit{any} reverse derivative
category, not just $\Smooth$.
Lastly, in our paper we took advantage of the graphical calculus for
$\Para$, redrawing many diagrams appearing in \cite{BackpropAsFunctor} in a structured way.

Other than \cite{BackpropAsFunctor}, there are a few more relevant papers.
The work of \cite{Dioptics} contains a sketch of some of the ideas this paper
evolved from. They are based on the interplay of optics with parameterisation,
albeit framed in the setting of diffeological spaces, and requiring cartesian and
local cartesian closed structure on the base category.
Lenses and Learners are studied in the eponymous work of
\cite{LensesAndLearners} which observes that learners are parametric lenses. They do not explore any of the
relevant $\Para$ or CRDC structure, but make the distinction between \textit{symmetric}
and \textit{asymmetric lenses}, studying how they are related to learners
defined in \cite{BackpropAsFunctor}. A lens-like implementation of automatic differentiation is the focus of
\cite{SimpleAD}, but learning algorithms aren't studied.
A relationship between category-theoretic perspective on probabilistic modeling
and gradient-based optimisation is studied in \cite{CatStochLik} which also
studies a variant of the $\Para$ construction.
Usage of Cartesian
differential categories to study learning is found in \cite{DelayedTrace}. They
extend the differential operator to work on stateful maps, but do not study
lenses, parameterisation nor update maps.
The work of \cite{CompDL} studies deep learning in the context of
Cycle-consistent Generative Adversarial Networks \cite{CycleGAN} and formalises
it via free and quotient categories, making parallels to the categorical
formulations of database theory \cite{FunctorialDataMigration}. They do use the
$\Para$ construction, but do not relate it to lenses nor reverse derivative categories. A general survey of category theoretic approaches to machine learning, covering
many of the above papers, can be found in \cite{CategoryTheoryMachineLearning}. Lastly, the concept of parametric lenses has started appearing in recent formulations
of categorical game theory and cybernetics \cite{TowardsCatCyber, ExtensiveFormGamesAgency}. The work of \cite{TowardsCatCyber} generalises the study of parametric lenses
into parametric optics and connects it to game thereotic concepts such as
Nash equilibria.


\section{Conclusions and Future Directions}
\label{section:conclusions}


We have given a categorical foundation of gradient-based learning algorithms which achieves a number of important goals.  The foundation is principled and mathematically clean, based on the fundamental idea of a \emph{parametric lens}.  The foundation covers a wide variety of examples: different optimisers and loss maps in gradient-based learning, different architectures and layer structures, different settings where gradient-based learning happens (smooth functions vs. boolean circuits), adversarial unsupervised learning, and  both learning of parameters and learning of inputs (deep dreaming).  Finally, the foundation is more than a mere abstraction: we have also shown how it can be used to give a practical implementation of learning, as discussed in Section \ref{section:implementation}.

There are a number of important directions which are possible to explore because of this work. 
One of the most exciting ones is a more comprehensive study of neural network architectures through the category-theoretic perspective.
Neural network architectures have begun to be studied using category theory adjacent machinery in the context of \emph{Geometric Deep Learning} (\cite{bronstein_geometric_2021}) and \emph{Topological Deep Learning} (\cite{papillon_architectures_2023}).
Recurrent neural networks, in particular, have been been studied in \cite{DelayedTrace}, in the context of differential categories and the concept of \emph{delayed trace} introduced in the same paper.
Despite this, a comprehensive categorical study of architectures is still missing in the literature.
As first noticed in \cite{DeepLearningAdHoc}, many architectures such as recurrent and recursive neural network have close parallels to concepts in functional programming such as folds, unfolds and accumulating maps, for instance.
As these functional concepts have clear categorical semantics, it is natural to ask whether these categorical semantics can be used to study neural network architectures.
We believe the categorical framework presented in this paper can serve as a natural starting point for such a study.
Future work includes modelling some classical systems as well, such as the Support Vector Machines \cite{SVM}, which should be possible with the usage of loss maps such as Hinge loss.

In all our settings we have fixed an optimiser beforehand. The work of
\cite{LTL} describes a \textit{meta-learning} approach which sees the optimiser
as a neural network whose parameters and gradient update rule can be learned.
This is an exciting prospect since one can model optimisers as parametric
lenses; and our framework covers learning with parametric lenses. 

Future work also includes using the full power of CRDC axioms. In particular, axioms RD.6 or RD.7, which deal with the behaviour of
higher-order derivatives, were not exploited in our work,   but they should play a role in modelling some supervised learning algorithms using higher-order derivatives (for example, the Hessian) for additional
optimisations. Taking this idea in a different direction, one
can see that much of our work can be applied to any functor of the form $F: \Ca
\to \Lens(\Ca)$ - $F$ does not necessarily have to be of the form $f \mapsto
\diset{f}{R[f]}$ for a CRDC $R$.  Moreover, by working with more generalised forms of
the lens category (such as dependent lenses), we may be able to capture ideas
related to supervised learning on manifolds. And, of course, we can vary
the parameter space to endow it with different structure from the functions we
wish to learn. In this vein, we wish to use fibrations/dependent types to model
the use of tangent bundles: this would foster the extension of the {\em correct
by construction} paradigm to machine learning, and thereby addressing the widely
acknowledged problem of trusted machine learning. The possibilities are made
much easier by the compositional nature of our framework.
Another key topic for future work is to link gradient-based learning with game theory. At a
high level, the former takes little incremental steps to achieve an equilibrium
while the later aims to do so in one fell swoop. Formalising this intuition is
possible with our lens-based framework and the lens-based framework for game
theory~\cite{CompositionalGameTheory}.  Finally, because our framework is quite
general, in future work we plan to consider further modifications and additions
to encompass probabilistic, non-gradient based, and other forms of non-supervised learning. This
includes genetic algorithms and reinforcement learning.

\bigskip

\noindent\textbf{Acknowledgements} Fabio Zanasi acknowledges support from \textsc{epsrc} EP/V002376/1.  Geoff Cruttwell acknowledges support from NSERC.  

\bibliographystyle{plain}

\appendix

 \section{More details on Parametric Categories}\label{app:para}

As mentioned in the main text, coherence rules in combining the two operations in \eqref{eq:para-composition-doublecat} just work as expected, in the sense that these diagrams can be ultimately `compiled' down to string diagrams for monoidal categories. For example, given maps $(P,f): A \to B$, $(Q,g): B \to C$ with reparametrisations $\alpha: P' \to P$, $\beta: Q' \to Q$, one could either first reparametrise $f$ and $g$ separately and then compose the results (below left), or compose first then reparametrise jointly (below right):
\begin{equation}
   \raisebox{-11pt}{\scalebox{0.8}{\tikzfig{para-language-rules-1}}} \qquad \qquad \qquad   \scalebox{0.8}{ \tikzfig{para-language-rules-2}}
\end{equation}
As expected, translating these two operations into string diagrams for monoidal categories yield equivalent representations of the same morphism. 
\begin{equation}\label{eq:para-language-rules-3}
  \scalebox{0.8}{ \tikzfig{para-language-rules-3}} \qquad = \qquad  \scalebox{0.8}{\tikzfig{para-language-rules-4}}
\end{equation}

\begin{rem} There is a 2-categorical perspective on $\Para(\Ca)$, which we glossed over in this paper for the sake of simplicity. In particular, the reparametrisations described above can also be seen as equipping
$\Para(\Ca)$ with 2-cells, giving a 2-categorical structure on $\Para(\Ca)$.
This is also coherent with respect to base change: if $\Ca$ and $\Da$ are strict symmetric monoidal categories, and $F \colon \Ca \to \Da$ a lax symmetric monoidal functor, then there is an induced 2-functor $\Para(F) \colon  \Para(\Ca) \to \Para(\Da)$ which agrees with $F$ on objects. This 2-functor is straightforward: for a 1-cell $(P,f): A \to B$, it applies $F$ to $P$ and $f$ and uses the (lax) comparison to get a map of the correct type.
We will see how this base change becomes important when performing
backpropagation on parametric maps (Eq. \ref{eq:para_rdc})

Lastly, we mention that $\Para(\Ca)$ inherits the symmetric monoidal structure
from $\Ca$ and that the induced 2-functor $\Para(F)$ respects that structure.
This will allow us to compose neural networks not only in series, but also in
parallel.  For more detail on alternative viewpoints on the $\Para$ construction, including how it can be viewed as the Grothendieck construction of a certain indexed category, see \cite{TowardsCatCyber}.  
\end{rem}

 \section{Background on Cartesian Reverse Differential Categories} \label{app:CRDC}

Here we briefly review the definitions of Cartesian left additive category (CLAC), Cartesian reverse differential category (CRDC) and additive and linear maps in these categories.  Note that in this appendix we follow the convention of \cite{CRDC} and write composition in diagrammatic order by juxtaposition of terms (rather than a semicolon) to shorten the form of many of the expressions.  

\begin{defi} A category $\Ca$ is said to be {\bf Cartesian} when there are chosen binary products $\times$, with projection maps $\pi_i$ and pairing operation $\langle - , - \rangle$, and a chosen terminal object $T$, with unique maps $!$ to the terminal object. 
\end{defi}

\begin{defi} A {\bf left additive category} \cite[Definition  1.1.1]{journal:BCS:CDC} (CLAC) is a category $\Ca$ such that each hom-set has  commutative monoid structure, with addition operation $+$ and zero maps 0, such that composition on the left preserves the additive structure: for any appropriate $f, g, h$, $f(g+h) = fg + fh$ and $f0 = 0$.
\end{defi}  

\begin{defi}
A map $h: X \to Y$ in a CLAC is \textbf{additive} if it has the property that it preserves additive structure by composition on the right: for any maps $x,y: Z \to X$, $(x+y);h = x;h + y;h$, and $0;h = 0$. 
\end{defi}

\begin{defi}[{Additive in second component, (compare \cite[Lemma 1.2.3]{journal:BCS:CDC})}]
  \label{def:additive_second_variable}
  A morphism $f : X \times A \to B$  is additive in the variable $A$ if it is an additive morphism of type $A \to B$ in the cartesian left-additive category $\CoKl(X \times -)$, where $\CoKl(X \times -)$ is the coKleisli category of the coreader comonad\footnote{There are a few other terms for this. One of them is ``the writer comonad'', though this is often confused with the writer \emph{monad} which additionally necessitates a monoid structure on $X$. It's also called reader comonad, because of duality to reader monad, and also ``product comonad'' or ``environment comonad''.}.
\end{defi}

\begin{defi}
A {\bf Cartesian left additive category} \cite[Definition 1.2.1]{journal:BCS:CDC} is a left additive category $\Ca$ which is Cartesian and such that all projection maps $\pi_i$ are additive. 
\end{defi}

\begin{defi}
  \label{def:clacat}
  We call $\CLACat$ the category whose objects are cartesian left-additive categories and whose morphisms are cartesian left-additive functors (functors which preserve products and commutative monoid structure on objects (\cite[Definition 1.3.1]{journal:BCS:CDC})).
\end{defi}

\begin{lem}
  \label{lemma:cla_preserves_additive}
  Let $\Ca$ and $\Da$ be cartesian left-additive categories, and $F : \Ca \to \Da$ a a cartesian left-additive functor.
  Let $f : A \to B$ be an additive morphism in $\Ca$. Then $F(f) : F(A) \to F(B)$ is also additive.
\end{lem}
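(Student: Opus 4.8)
The plan is to replace the elementwise definition of additivity by an equivalent ``point-free'' one, phrased entirely through equations between morphisms, since any such formulation is automatically transported by a structure-preserving functor. First I would record the auxiliary fact (implicit in \cite{journal:BCS:CDC}) that in any cartesian left-additive category $\Ca$ every object $X$ carries a canonical commutative monoid structure, with multiplication $+_X := \pi_0 + \pi_1 \colon X \times X \to X$ and unit $0_X \colon T \to X$ --- associativity, commutativity and unitality following from the commutative monoid structure on hom-sets together with left-additivity --- and that a morphism $h \colon X \to Y$ is additive if and only if it is a homomorphism of these monoids, i.e.\ $+_X ; h = (h \times h) ; +_Y$ and $0_X ; h = 0_Y$.

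To prove this characterisation: for the ``only if'' direction I would instantiate the defining equation $(x+y) ; h = x ; h + y ; h$ at $x = \pi_0$ and $y = \pi_1 \colon X \times X \to X$; using naturality of the projections ($\pi_i ; h = (h \times h) ; \pi_i$) and left-additivity to recombine, the right-hand side becomes $(h \times h) ; (\pi_0 + \pi_1) = (h \times h) ; +_Y$, which is the homomorphism equation (the unit equation being immediate from $0 ; h = 0$). For the converse, any $x, y \colon Z \to X$ satisfy $x + y = \langle x, y \rangle ; +_X$ by left-additivity, so if $h$ is a monoid homomorphism then $(x+y) ; h = \langle x, y \rangle ; +_X ; h = \langle x, y \rangle ; (h \times h) ; +_Y = \langle x;h, y;h \rangle ; +_Y = x;h + y;h$, and $0;h = {!} \, ; 0_X ; h = {!} \, ; 0_Y = 0$.

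The second step is to note that a cartesian left-additive functor $F$ preserves this canonical structure: as $F$ preserves finite products, the comparison isomorphism $\varphi_X \colon F(X \times X) \xrightarrow{\sim} FX \times FX$ satisfies $\varphi_X ; \pi_i = F\pi_i$, and as $F$ preserves the additive structure on hom-sets, $F(+_X) = F(\pi_0 + \pi_1) = F\pi_0 + F\pi_1 = \varphi_X ; \pi_0 + \varphi_X ; \pi_1 = \varphi_X ; +_{FX}$, with the analogous statement for $0_X$ modulo the terminal comparison. The lemma then follows mechanically: an additive $f \colon A \to B$ is a monoid homomorphism in $\Ca$ by step one, transporting its defining equations along the comparison isomorphisms gives the homomorphism equations for $F(f)$ in $\Da$, and step one applied inside $\Da$ yields that $F(f)$ is additive.

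The step I expect to be the main --- indeed the only --- obstacle is the bookkeeping of the product and terminal comparison isomorphisms in this last transport: when $F$ happens to preserve products on the nose it is vacuous, and in general it is a routine diagram chase with no conceptual content. The substance of the lemma is entirely in step one, namely that additivity is equivalent to a property expressible purely by morphism equations, which is exactly the kind of property a product- and addition-preserving functor must respect.
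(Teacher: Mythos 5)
Your proof is correct. The paper itself gives no proof of this lemma --- it is stated bare in the appendix and treated as routine --- so there is nothing to diverge from; your argument is exactly the standard one the authors are implicitly relying on. The key step, recasting ``additive'' as the point-free statement that $h$ is a homomorphism for the canonical commutative monoid structure $(+_X = \pi_0 + \pi_1,\ 0_X)$ on each object, is precisely the characterisation from \cite{journal:BCS:CDC} (where cartesian left-additive categories are shown to be equivalently categories in which every object carries a compatible commutative monoid structure), and once additivity is phrased as equations between morphisms built from products and hom-set addition, preservation by a cartesian left-additive functor is immediate modulo the comparison isomorphisms, as you say. Both directions of your equivalence check out, including the small points ($x + y = \langle x,y\rangle ; +_X$ and $0_{Z,X} = {!_Z};0_{T,X}$ via left-additivity), so this would serve as a complete proof of the lemma the paper leaves unproved.
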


The central definition of \cite{CRDC} is the following:

\begin{defi}\label{def:crdc}
A \textbf{Cartesian reverse differential category} (CRDC) is a Cartesian left additive category $\Ca$ which has, for each map $f: A \to B$ in $\Ca$, a map
	\[ R[f]: A \times B \to A \]
satisfying seven axioms: \\
\noindent 
 {\bf [RD.1]}   $R[f+g] = R[f] + R[g]$ and $R[0]=0$. \\
\noindent  
{\bf [RD.2]} $\<a,b+c\>R[f] = \<a,b\>R[f] + \<a,c\>R[f]$ and  $\<a,0\>R[f] = 0$. \\
{\bf [RD.3]} 
    $R[1] = \pi_1$, $R[\pi_0] = \pi_1 \iota_0$, and $R[\pi_1] = \pi_1 \iota_1$.  
\noindent {\bf [RD.4]} 
   For a tupling of maps $f$ and $g$, the following equality holds:      
$$R[\<f,g\>] = (1 \times \pi_0);R[f]+ (1 \times \pi_1);R[g]$$
And if $!_A: A \to T$ is the unique map to the terminal object, $R[!_A] = 0$.
        
\noindent {\bf [RD.5]} 
For composable maps $f$ and $g$,
	\[ R[fg] = \<\pi_0,\pi_0f,\pi_1\>\>(1 \times R[g])R[f] \]
\noindent {\bf [RD.6]} 
\[ \<1\times \pi_0,0\times \pi_1\>(\iota_0 \times 1)R[R[R[f]]]\pi_1 = (1\times \pi_1)R[f] .  \] \\ 
\noindent {\bf [RD.7]} 
\[ (\iota_0 \times 1);R[R[(\iota_0 \times 1)R[R[f]]\pi_1]];\pi_1  = \mathsf{ex};(\iota_0 \times 1)R[R[(\iota_0 \times 1)R[R[f]]\pi_1]]\pi_1 \] (where $\mathsf{ex}$ is the map that exchanges the middle two variables).  
\end{defi}

As discussed in \cite{CRDC}, these axioms correspond to familiar properties of the reverse derivative:
\begin{itemize}
	\item  {\bf [RD.1]} says that differentiation preserves addition of maps, while  {\bf [RD.2]} says that differentiation is additive in its vector variable.
	\item {\bf [RD.3]} and {\bf [RD.4]} handle the derivatives of identities, projections, and tuples.  
	\item {\bf [RD.5]} is the (reverse) chain rule.  
	\item {\bf [RD.6]} says that the reverse derivative is linear in its vector variable.
	\item {\bf [RD.7]} expresses the independence of order of mixed partial derivatives.  
\end{itemize}

We proceed to prove the following theorem in three steps.
\LensFunctorCLAC*

The first step is formally defining the category $\Lens_A(\Ca)$.
\begin{defi}
  \label{def:lensa}
  Let $\Ca$  be a cartesian left-additive category.
  Then $\Lens_A(\Ca)$ is a wide subcategory of $\Lens(\Ca)$ where
  \[
    \Lens_A(\Ca)\LensHom{A}{A'}{B}{B'} \coloneqq \Ca(A, B) \times \CoKl(A \times -)_A(B', A')
  \]
  Compare this with the defintion of $\Lens(\Ca)$ via the Grothendieck construction (\cite[Prop. 3.10]{spivak_generalized_2022}) where
  \[
    \Lens(\Ca)\LensHom{A}{A'}{B}{B'} \coloneqq \Ca(A, B) \times \CoKl(A \times -)(B', A')
  \]
\end{defi}

The second step is showing this category is cartesian left-additive.
\begin{prop}
  \label{prop:lens_cart_left_additive}
  The category $\Lens_A(\Ca)$ is cartesian left-additive.
\end{prop}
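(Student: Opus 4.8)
The plan is to check the two halves of Definition~\ref{defn:CRDC}: that $\Lens_A(\Ca)$ has chosen finite products, and that it carries a compatible left-additive structure, in both cases reducing the verification to the corresponding facts about $\Ca$ together with the additivity-in-the-second-variable condition built into Definition~\ref{def:lensa}. For the product structure I would take the terminal object to be $\diset{T}{T}$ with $T$ terminal in $\Ca$: the $\gett$-part of any lens $\diset{A}{A'}\to\diset{T}{T}$ is forced to be $!_A$, and its $\putt$-part is a map $A\times T\to A'$ additive in $T$ in the sense of Definition~\ref{def:additive_second_variable}; since $T$ is terminal this forces it to be the zero map, so the lens is unique. (This is exactly the point where $\Lens(\Ca)$ failed to be Cartesian in Remark~\ref{remark:lens_monoidal}.) For binary products I set $\diset{A}{A'}\times\diset{B}{B'}:=\diset{A\times B}{A'\times B'}$ with projection $\pi_1$ given by $\gett=\pi_0$ and $\putt((a,b),a')=(a',0)$, and dually $\pi_2$; each has backward part additive in its second argument, so lies in $\Lens_A(\Ca)$. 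Given $\diset{f}{f^*},\diset{g}{g^*}\colon\diset{C}{C'}\to\diset{A}{A'},\diset{B}{B'}$, I define the pairing by $\gett=\langle f,g\rangle$ and $\putt(c,(a',b'))=f^*(c,a')+g^*(c,b')$ --- here the hom-set addition together with additivity of $f^*,g^*$ in their second variables is precisely what makes this a morphism of $\Lens_A(\Ca)$. The three universal-property equations then follow by a direct computation with the lens composition formula, using $f^*(c,0)=0$.

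For the left-additive structure, I would observe that the hom-set $\Lens_A(\Ca)\LensHom{A}{A'}{B}{B'}=\Ca(A,B)\times\CoKl(A\times-)_A(B',A')$ is a product of commutative monoids (the first factor because $\Ca$ is a CLAC, the second because sums and the zero of maps additive in the second variable are again additive), and equip it with the componentwise monoid structure $(f,f^*)+(g,g^*):=(f+g,f^*+g^*)$, $0:=(0,0)$. It then remains to check that precomposition with a fixed lens $\diset{h}{h^*}$ preserves addition and zero. On $\gett$-parts this is immediate from left-additivity of $\Ca$. On $\putt$-parts, unfolding the composite gives $\putt_{h;(p+q)}(x,b')=h^*\!\big(x,(p^*+q^*)(h(x),b')\big)=h^*\!\big(x,\,p^*(h(x),b')+q^*(h(x),b')\big)$, and now \emph{additivity of $h^*$ in its second variable} splits this as $h^*(x,p^*(h(x),b'))+h^*(x,q^*(h(x),b'))=\putt_{h;p}(x,b')+\putt_{h;q}(x,b')$; likewise $h^*(x,0)=0$ yields $\putt_{h;0}=0$. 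Finally, the projections are additive morphisms: postcomposing $u+v$ with $\pi_i$ preserves addition on $\gett$ because the $\Ca$-projections are additive, and on $\putt$ because $\putt_{\pi_i}(\cdot,a')$ is a coprojection, so $\putt_{(u+v);\pi_i}(z,a')=\putt_{u+v}(z,(a',0))=\putt_u(z,(a',0))+\putt_v(z,(a',0))$.

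The main obstacle is bookkeeping rather than any deep point: every verification routes through the somewhat intricate lens composition formula, and one must invoke additivity of backward passes in the second variable at exactly the spots where $\Lens(\Ca)$ itself is not Cartesian --- the pairing operation and the left-additivity law. The organising invariant is that \emph{every backward pass appearing in any composite is additive in its tangent input}; once this is tracked through the constructions, each axiom collapses to its $\Ca$-counterpart plus this additivity. I would also make explicit, as part of the argument, that each morphism constructed along the way (projections, pairings, sums, zeros) does have a backward part additive in its second variable, so that the entire development stays inside the wide subcategory $\Lens_A(\Ca)$.
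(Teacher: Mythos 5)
Your proof is correct, but it takes a genuinely different route from the paper's. The paper invokes the characterisation of Cartesian left-additive categories as Cartesian categories in which every object carries a compatible commutative monoid structure: it simply exhibits, for each object $\diset{A}{A'}$, a unit lens $\diset{0_A}{!}$ and a multiplication lens $\diset{+_A}{\pi_2;\Delta_{A'}}$, and declares the monoid laws routine; the hom-set addition is then recovered as $\langle f,g\rangle;+$, and the Cartesian structure of $\Lens_A(\Ca)$ is left implicit. You instead work directly from the hom-set-enrichment definition: you construct the terminal object and binary products explicitly, equip each hom-set with the componentwise monoid structure, and verify left-additivity of composition. What your approach buys is that it makes visible exactly where the restriction to backward passes additive in the second variable is indispensable --- the uniqueness of the map to $\diset{T}{T}$, the universal property of the pairing, and the step $h^*(x,\,p^*(h(x),b')+q^*(h(x),b'))=h^*(x,p^*(h(x),b'))+h^*(x,q^*(h(x),b'))$ in the left-additivity law --- precisely the points at which plain $\Lens(\Ca)$ fails to be Cartesian (Remark~\ref{remark:lens_monoidal}). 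What the paper's approach buys is brevity: the internal-monoid data is easy to write down and the compatibility conditions are short diagram chases. One small point worth making fully explicit in your write-up is the uniqueness clause of the binary product: given $\diset{h}{h^*}$ with $h;\pi_0=f$ and $h;\pi_1=g$, additivity of $h^*$ in its second variable gives $h^*(c,(a',b'))=h^*(c,(a',0))+h^*(c,(0,b'))=f^*(c,a')+g^*(c,b')$, which pins down $h^*$; this is the same invariant you already track elsewhere, so it is a one-line addition.
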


\begin{proof}
  We need to equip $\Lens_A(\Ca)$ with a commutative monoid on every object in a way that's compatible with the cartesian structure.\footnote{We don't need to show that this monoid is unique, just that it exists and can be canonically defined.}
  That is, for every object $\diset{A}{A'}$ we need to provide two morphisms:
\begin{itemize}
\item \textbf{Unit } $0_{\diset{A}{A'}} : \diset{1}{1} \to \diset{A}{A'}$. This is a lens whose forward map we set as the zero map $0_A$ and the backward map as the delete $!_{1 \times A'}$.
\item \textbf{Multiplication} $+_{\diset{A}{A'}} : \diset{A \times A}{A' \times A'} \to \diset{A}{A'}$. This is a lens whose forward map we set as sum $+_A$ and the backward map as copy, i.e. $(A \times A) \times A' \xrightarrow{\pi_2} A' \xrightarrow{\Delta_{A'}} A' \times A'$.
\end{itemize}
Additionally, these morphisms need to obey the monoid laws.
This can be verified by routine.
\end{proof}

This defines the action of $\Lens_A$ on objects of $\CLACat$.
Action on morphisms is defined below.

\begin{prop}
  \label{prop:lens_functor_cart_left_add}
  Let $F : \Ca \to \Da$ be a cartesian left-additive functor.
  This induces a cartesian left-additive functor $\Lens_A(F)$ between the corresponding categories of lenses: 
  
  \begin{equation}
    \label{eq:lensa_action_on_morphisms}
    \begin{tikzcd}
        {\Lens_A(\Ca)} &&& {\Lens_A(\Da)} \\
        {\diset{A}{A'}} &&& {\diset{F(A)}{F(A')}} \\
        \\
        {\diset{B}{B'}} &&& {\diset{F(B)}{F(B')}}
        \arrow["{ \Lens_A(F)}", from=1-1, to=1-4]
        \arrow["{\diset{f}{f^*}}"', from=2-1, to=4-1]
        \arrow["{\diset{F(f)}{\overline{f^*}}}", from=2-4, to=4-4]
        \arrow[maps to, from=2-1, to=2-4]
        \arrow[maps to, from=4-1, to=4-4]
      \end{tikzcd}
  \end{equation}
  where $\overline{f^*} \coloneqq F(A') \times F(B') \cong F(A' \times B') \xrightarrow{F(f')} F(A')$. 
\end{prop}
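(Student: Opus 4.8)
The plan is to verify, in this order: (1) that the assignment on morphisms is well-defined, i.e. that $\overline{f^*}$ really is additive in its second component, so that $\diset{F(f)}{\overline{f^*}}$ is a morphism of $\Lens_A(\Da)$ and not merely of $\Lens(\Da)$; (2) that $\Lens_A(F)$ is a functor; (3) that it preserves the cartesian structure; and (4) that it preserves the left-additive structure, i.e. the chosen commutative monoids on objects.

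For (1): by Definition~\ref{def:lensa}, the backward part $f^*\colon A\times B'\to A'$ of a morphism of $\Lens_A(\Ca)$ is, by definition, an additive morphism $B'\to A'$ in the coKleisli category $\CoKl(A\times -)$ of the coreader comonad. A cartesian left-additive functor $F$ carries the comonad $A\times -$ on $\Ca$ to the comonad $F(A)\times -$ on $\Da$, using the product comparison isomorphism $F(A)\times F(-)\cong F(A\times -)$, and this is compatible with the two coKleisli constructions; hence $F$ induces a cartesian left-additive functor $\CoKl(A\times -)\to\CoKl(F(A)\times -)$ which sends $f^*$ to $\overline{f^*}$ (the composite of $F(f^*)$ with the comparison iso, as in the statement). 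By Lemma~\ref{lemma:cla_preserves_additive}, this functor preserves additive morphisms, so $\overline{f^*}$ is again additive in its second component, and therefore $\diset{F(f)}{\overline{f^*}}$ genuinely lies in $\Lens_A(\Da)$.

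For (2): on the identity $\diset{1_A}{\pi_1}$, since $F$ preserves products (hence projections and pairings, up to the comparison iso) we obtain $\diset{F(1_A)}{\overline{\pi_1}}=\diset{1_{F(A)}}{\pi_1}$, the identity of $\Lens_A(\Da)$. For composition, one uses the explicit formula for the composite of lenses --- get part $f;g$ and put part $\langle\pi_0,\langle\pi_0;f,\pi_1\rangle;g^*\rangle;f^*$ --- and checks that applying $F$ and inserting the product comparison isomorphisms wherever a product is formed yields exactly the composite $\Lens_A(F)(\diset{f}{f^*});\Lens_A(F)(\diset{g}{g^*})$. This reduces to the facts that $F$ preserves composition, pairings and projections, together with coherence of the product comparison isomorphisms; it is a routine diagram chase.

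For (3) and (4): since $F$ preserves products it sends the monoidal product $\diset{A}{A'}\otimes\diset{B}{B'}=\diset{A\times B}{A'\times B'}$ and the unit $\diset{1}{1}$ of $\Lens_A(\Ca)$ to those of $\Lens_A(\Da)$, and likewise the projection and diagonal lenses witnessing the cartesian structure established in Proposition~\ref{prop:lens_cart_left_additive}; hence $\Lens_A(F)$ preserves finite products. For the commutative monoids, recall from the proof of Proposition~\ref{prop:lens_cart_left_additive} that $0_{\diset{A}{A'}}$ has forward part $0_A$ and backward part a terminal map, while $+_{\diset{A}{A'}}$ has forward part $+_A$ and backward part built from a projection followed by $\Delta_{A'}$. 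As $F$ preserves zero maps, addition, terminal maps, projections and diagonals, it carries these lenses to the corresponding ones over $\Da$, so $\Lens_A(F)$ preserves the chosen commutative monoid structure and is therefore cartesian left-additive. The main obstacle is item (1): making precise that $F$ descends to a cartesian left-additive functor between the coKleisli categories of the coreader comonads, which is exactly what licenses the application of Lemma~\ref{lemma:cla_preserves_additive}; everything else is bookkeeping with the product comparison isomorphisms, routine by coherence.
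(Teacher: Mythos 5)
Your proposal is correct and follows essentially the same route as the paper's own proof: well-definedness of $\overline{f^*}$ via Lemma~\ref{lemma:cla_preserves_additive}, functoriality by routine calculation, and preservation of the cartesian and left-additive structure by unpacking the comonoid and monoid lenses and using that $F$ preserves products, zeros, and addition. Your step (1) is in fact slightly more careful than the paper's, since you make explicit that the additivity in question lives in the coKleisli category $\CoKl(A\times -)$ and that $F$ must descend to a cartesian left-additive functor between coKleisli categories before the lemma applies.
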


\begin{proof}
  We need to prove that $\Lens_A(F)$ is a cartesian left-additive functor.
  To prove it is a functor, we need to:
  \begin{itemize}
  \item Define its action on objects and morphisms. We have done this in Prop.\ \ref{prop:lens_functor_cart_left_add} itself;
  \item Prove additivity of $\overline{f^*}$. This follows from Lemma.\ \ref{lemma:cla_preserves_additive};
  \item Prove identities are preserved.
    The identity $\diset{\id_A}{\pi_2} : \diset{A}{A'} \to \diset{A}{A'}$ in the domain gets mapped to $\diset{F(\id_A)}{F(\pi_2)}$.
    By preservation of identities and products of $F$ this is equal to the identity map on $\diset{F(A)}{F(A')}$.
  \item Prove composition is preserved. This can be be by routine, albeit tedious calculation.
  \end{itemize}
  To prove that it is additionally cartesian, we need to show that the image of every comonoid $(\diset{A}{A'}, !_{\diset{A}{A'}}, \Delta_{\diset{A}{A'}})$ is also a comonoid, and that all maps preserve comonoids.
     We can understand the first part in terms of actions on the counit and comultiplication of the comonoid.
    \begin{itemize}
    \item \textbf{Counit.} The action on the counit unpacks to the pair $\diset{F(!_{\diset{A}{A'}})}{F(!_{A \times 1} ; 0_A)}$.
      By preservation of terminal and additive maps of $F$ this morphism is equal to the counit of $\diset{F(A)}{F(A')}$.
    \item \textbf{Comultiplication.} The action on the comultiplication unpacks to $\diset{(F(\Delta_A)}{F(\pi_{2, 3} ; +_A)}$.
      By $F$'s preservation of products and additive morphisms this morphism is equal to the comultiplication of $\diset{F(A)}{F(A')}$.
    \end{itemize}
    It is routine to show it obey the corresponding laws and form a comonoid.
    
    For the second part we need to show that the image of every lens $\diset{f}{f^\sharp} : \diset{A}{A'} \to \diset{B}{B'}$ preserves these comonoids.
    For the forward part this is true because $F$ preserves products.
    For the backwards part this is true because $F$ is left-additive.
    
    Lastly, we need to prove that this functor is additionally left-additive.
    This means that it preserves the monoid $(\diset{A}{A'}, 0_{\diset{A}{A'}}, +_{\diset{A}{A'}})$ of every object.
    We unpack the action of $\Lens_A(F)$ on the unit $0_{\diset{A}{A'}}$ and multiplication $+_{\diset{A}{A'}}$ below.
    \begin{itemize}
    \item \textbf{Unit.} The action on the unit unpacks to the pair $\diset{F(0_A)}{F(!_{1 \times A})}$.
      By preservation of additive and terminal maps of $F$ this morphism is equal to the unit of $\diset{F(A)}{F(A)}$;
    \item \textbf{Multiplication.} The action on the multiplication unpacks to the pair $\diset{F(+_A)}{F(\pi_3 ; \Delta_A)}$.
      By preservation of coadditive maps and products of $F$ this morphism is equal to  the multiplication of $\diset{F(A)}{F(A')}$.
    \end{itemize}
    Seeing as these monoids in the codomain are of the same form as those in the domain, it is routine to show that they obey the monoid laws.
  This concludes the proof that $\Lens_A(F)$ is a cartesian left-additive functor.
\end{proof}

What remains to show is that $\Lens_A$ preserves identities and composition, which follows routinely, concluding the proof of (Thm.\ \ref{thm:lens_functor_clac}).

This functor has additional structure --- it is copointed.\footnote{Despite this the functor $\Lens_A$ does not have the comonad structure, for similar reasons that tangent categories do not.}

\begin{prop}[{Copointed structure of $\Lens_A$}]
  \label{prop:lensa_copointed}
  There is a natural transformation $\epsilon : \Lens_A \Rightarrow \id_{\CLACat}$ which on components assigns to cartesian left-additive category $\Ca$ a cartesian left-additive functor $\epsilon_{\Ca} : \Lens_A(\Ca) \to \Ca$ which forgets the backward part of the lens.
\end{prop}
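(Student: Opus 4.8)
The plan is to define $\epsilon_{\Ca}$ explicitly and then verify, in order, (a) that it is a functor, (b) that it is cartesian left-additive, and (c) that the family $(\epsilon_{\Ca})_{\Ca}$ is natural in $\Ca$. On objects I set $\epsilon_{\Ca}\diset{A}{A'} = A$, and on a morphism $\diset{f}{f^*}\colon \diset{A}{A'}\to\diset{B}{B'}$ I set $\epsilon_{\Ca}\diset{f}{f^*} = f$ --- that is, simply discard the backward component.

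Functoriality is immediate from the shape of lens composition. Since $\Lens_A(\Ca)$ is a wide subcategory of $\Lens(\Ca)$ (Def.\ \ref{def:lensa}), the identity on $\diset{A}{A'}$ is $\diset{1_A}{\pi_1}$, whose forward part is $1_A$, and the forward part of the composite of $\diset{f}{f^*}$ and $\diset{g}{g^*}$ is $f;g$. Hence $\epsilon_{\Ca}$ strictly preserves identities and composition. For the cartesian left-additive structure, I would use the explicit description of $\Lens_A(\Ca)$ from Prop.\ \ref{prop:lens_cart_left_additive}: $\diset{A}{A'}\otimes\diset{B}{B'} = \diset{A\times B}{A'\times B'}$, the terminal object is $\diset{1}{1}$, and the commutative monoid on $\diset{A}{A'}$ has forward parts $0_A\colon 1\to A$ and $+_A\colon A\times A\to A$. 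Applying $\epsilon_{\Ca}$ therefore returns exactly $A\times B$, the terminal object $1$, the projections/pairings of $\Ca$, and the canonical monoid $(A,0_A,+_A)$ of $\Ca$ on the nose; so $\epsilon_{\Ca}$ is a cartesian left-additive functor, i.e.\ a morphism in $\CLACat$.

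Finally, for naturality: given a cartesian left-additive functor $F\colon\Ca\to\Da$, I must check $\epsilon_{\Da}\circ\Lens_A(F) = F\circ\epsilon_{\Ca}$ as functors $\Lens_A(\Ca)\to\Da$. On objects both composites send $\diset{A}{A'}$ to $F(A)$. On a morphism $\diset{f}{f^*}$, Prop.\ \ref{prop:lens_functor_cart_left_add} gives $\Lens_A(F)\diset{f}{f^*} = \diset{F(f)}{\overline{f^*}}$, so $\epsilon_{\Da}$ discards $\overline{f^*}$ and leaves $F(f)$; the other composite first discards $f^*$ and then applies $F$, again yielding $F(f)$. Hence the square commutes strictly, and $\epsilon$ is a (strict) natural transformation $\Lens_A\Rightarrow\id_{\CLACat}$.

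The argument is entirely routine, so I do not expect a genuine obstacle; the only point needing a little care is confirming that $\epsilon_{\Ca}$ really lands in $\CLACat$ rather than merely in $\Cat$ --- namely that it preserves the chosen monoid on each object --- which is precisely where the explicit monoid structure on $\Lens_A(\Ca)$ from Prop.\ \ref{prop:lens_cart_left_additive} is used, and, symmetrically, matching the naturality square against the action of $\Lens_A$ on morphisms via $\overline{f^*}$ from Prop.\ \ref{prop:lens_functor_cart_left_add}, which becomes trivial once the forward parts are identified.
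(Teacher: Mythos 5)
Your proof is correct and takes the only natural route, which is exactly what the paper leaves implicit: the paper states this proposition without proof, taking the forgetful functor $\diset{f}{f^*}\mapsto f$ and the routine verifications for granted. Your explicit checks of functoriality, preservation of the cartesian left-additive structure (using the monoid from Prop.~\ref{prop:lens_cart_left_additive}), and strict naturality against $\Lens_A(F)$ from Prop.~\ref{prop:lens_functor_cart_left_add} are all accurate and complete.
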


\FiveOutOfSeven*

\begin{proof}
  We have shown how a putative reverse derivative combinator arises out of the functor $\RC : \Ca \to \Lens_A(\Ca)$.
  What remains to prove is that this combinator satisfies the first five axioms of a CRDC.
\begin{enumerate}
\item \textbf{Additivity of reverse differentiation.} This is recovered by $\RC$ preserving left-additive structure.
\item \textbf{Additivity of reverse derivative in the second variable.} This is recovered by definition of $\Lens_A$ --- the backward maps are additive in the 2nd component.
\item \textbf{Coherence with identities and projections.} Coherence with
  identities is recovered by preservation of identities of the functor $\RC$, where for every $X : \Ca$, $\RC(\id_X) = \id_{\RC(X)} = (\id_X, \pi_2 : X \times X \to X)$. Coherence with projections is recovered by $\RC$ preserving cartesian structure.
\item \textbf{Coherence with pairings.} Recovered by $\RC$ preserving cartesian structure.
\item \textbf{Reverse chain rule.} Recovered by functoriality of $\RC$.
\end{enumerate}
\end{proof}




\end{document}